 \newcommand{\DC}{D_C}
 \newcommand{\GG}{\mathcal{G}}
 \newcommand{\X}{\mathcal{X}}
 \newcommand{\U}{\mathcal{U}}
\newcommand{\intg}{\mathbb{Z}}
\newcommand{\real}{\mathbb{R}}
\newcommand{\OO}{\mathcal{O}}
\newcommand{\trace}[1]{\mbox{tr}\left(#1\right)}
\newcommand{\EE}[1]{\mathbb{E}\left[#1\right]}
\newcommand{\EEtb}[1]{\mathbb{E}_t\bigl[#1\bigr]}
\newcommand{\EEs}[2]{\mathbb{E}_{#2}\left[#1\right]}
\newcommand{\PPpi}[1]{\mathbb{P}_{\pi}\left[#1\right]}
\newcommand{\EEt}[1]{\mathbb{E}_t\left[#1\right]}
\newcommand{\EEcc}[2]{\mathbb{E}\left[\left.#1\right|#2\right]}
\def\argmin{\mathop{\mbox{ arg\,min}}}
\newcommand{\ra}{\rightarrow}
\newcommand{\siprod}[2]{\langle#1,#2\rangle}
\newcommand{\iprod}[2]{\left\langle#1,#2\right\rangle}
\newcommand{\biprod}[2]{\bigl\langle#1,#2\bigr\rangle}
\newcommand{\norm}[1]{\left\|#1\right\|}
\newcommand{\onenorm}[1]{\norm{#1}_1}
\newcommand{\twonorm}[1]{\norm{#1}_2}
\newcommand{\infnorm}[1]{\norm{#1}_\infty}
\newcommand{\opnorm}[1]{\norm{#1}_{\text{op}}}
\newcommand{\ev}[1]{\left\{#1\right\}}
\newcommand{\pa}[1]{\left(#1\right)}
\newcommand{\bpa}[1]{\bigl(#1\bigr)}
\newcommand{\wh}{\widehat}
\newcommand{\wt}{\widetilde}
\newcommand{\loss}{r}
\newcommand{\tmu}{\wt{\mu}}
\newcommand{\ssp}{\mathcal{X}}
\newcommand{\actions}{\mathcal{A}}
\newcommand{\A}{\mathcal{A}}
\newcommand{\lambdamin}{\lambda_{\min}}
\newcommand{\hmu}{\wh{\mu}}
\newcommand{\hpi}{\wh{\pi}}
\newcommand{\hu}{\wh{u}}
\newcommand{\hr}{\wh{r}}
\newcommand{\htheta}{\wh{\theta}}
\newcommand{\ttheta}{\wt{\theta}}
\newcommand{\bW}{W}
\newcommand{\Sp}{\Sigma^+}
\newcommand{\hSp}{\wh{\Sigma}^+}
\newcommand{\transpose}{^\mathsf{\scriptscriptstyle T}}
\newcommand{\bone}{\bm{1}}
\definecolor{PalePurp}{rgb}{0.66,0.57,0.66}
\newcommand{\regret}{\mathfrak{R}}
\newcommand{\hregret}{\widehat{\mathfrak{R}}}
\newcommand{\linexprl}{\textsc{MDP-LinExp3}\xspace}
\newcommand{\OQREPS}{\textsc{Online Q-REPS}\xspace}
\newcommand{\QREPS}{\textsc{Q-REPS}\xspace}
\newcommand{\OREPS}{\textsc{O-REPS}\xspace}
\newcommand{\ftrlrl}{\OQREPS}
\newcommand{\jmlrQED}{$\blacksquare$}
\newtheorem{theorem}{Theorem} 
\newtheorem{lemma}{Lemma} 
\newtheorem{proposition}{Proposition} 
\newtheorem{assumption}{Assumption}
\title{Online Learning in MDPs with \\ Linear Function Approximation and Bandit Feedback}
\author{%
  Gergely Neu \\
  Universitat Pompeu Fabra\\ 
  Barcelona, Spain\\
  \texttt{gergely.neu@gmail.com} \\
   \And
   Julia Olkhovskaya\\
     Universitat Pompeu Fabra\\ 
     Barcelona, Spain\\
     \texttt{julia.olkhovskaya@gmail.com}
}
\begin{document}

\maketitle

\begin{abstract}
 We consider the problem of online learning in an episodic Markov decision process, where the reward function is 
allowed to change between episodes in an adversarial manner and the learner only observes the rewards associated with 
its actions. We assume that rewards and the transition function can be represented as linear functions in terms of a 
known low-dimensional feature map, which allows us to consider the setting where  the state space is arbitrarily large. 
We also assume that the learner has a perfect knowledge of the MDP dynamics.
Our main contribution is developing an algorithm whose expected regret after
$T$ episodes is bounded by $\widetilde{\mathcal{O}}\bpa{\sqrt{dHT}}$, where $H$ is the number of steps in each episode 
and $d$ is the dimensionality of the feature map.
\end{abstract}

\section{Introduction}
We study the problem of online learning in episodic Markov Decision Processes (MDP), modeling a sequential 
decision making problem where the interaction between a learner and its environment is divided into $T$ episodes of  
fixed length $H$. At each time step of the episode, the learner observes the current state of the environment, chooses 
one of the available actions, and earns a reward. Consequently, the state of the environment changes according to the 
transition function of the underlying MDP, as a function of the previous state and the action taken by the learner. 
A key distinguishing feature of our setting is that we assume that the reward function can change arbitrarily between 
episodes, and the learner only has access to bandit feedback: instead of being able to observe the reward function at 
the end of the episode, the learner only gets to observe the rewards that it actually received. As traditional in this 
line of work, we aim to design algorithms for the learner with theoretical guarantees on her regret, which is the 
difference between the total reward accumulated by the learner and the total reward of the best stationary policy fixed 
in hindsight.

Unlike most previous work on this problem, we allow the state space to be very large and aim to prove performance 
guarantees that do not depend on the size of the state space, bringing theory one step closer to practical scenarios 
where assuming finite state spaces is unrealistic. To address the challenge of learning in large state spaces, we adopt 
the classic RL technique of using \emph{linear function approximation} and suppose that we have access to a relatively 
low-dimensional feature map that can be used to represent policies and value functions. We will assume that the feature 
map is expressive enough so that all action-value functions can be expressed as linear functions of the features, and 
that the learner has full knowledge of the transition function of the MDP.

Our main contribution is designing a computationally efficient algorithm called \OQREPS, and prove that in the 
setting described above, its regret is at most $\mathcal{O}\bpa{\sqrt{dHT D\pa{\mu^*\|\mu_0}}}$, where 
$d$ is the dimensionality of the feature map and $D\pa{\mu^*\|\mu_0}$ is the relative entropy between the state-action 
distribution $\mu^*$ induced by the optimal policy and an initial distribution $\mu_0$ given as input to the algorithm. 
Notably, our results do not require the likelihood ratio between these distributions to be uniformly bounded, and the 
bound shows no dependence on the eigenvalues of the feature covariance matrices. Our algorithm itself requires solving 
a $d^2$-dimensional convex optimization problem at the beginning of each episode, which can be solved to arbitrary 
precision $\varepsilon$ in time polynomial in $d$ and $1/\varepsilon$, independently of the size of the state-action 
space.

Our work fits into a long line of research considering online learning in Markov decision processes. The problem of 
regret minimization in stationary MDPs with a \emph{fixed} reward function has been studied extensively since the work 
of \citet{BK97,auer06logarithmic,tewari08optimistic,jaksch10ucrl}, with several important advances made in the past 
decade \citep{DB15,DLB17,AOM17,FPL18,JAZBJ18}. While most of these works considered small finite state spaces, the same 
techniques have been very recently extended to accommodate infinite state spaces under the assumption of 
realizable function approximation by \citet{2019arXiv190705388J} and \citet{YW19}. In particular, the notion of 
\emph{linear MDPs} introduced by \citet{2019arXiv190705388J}  has become a standard model for linear 
function approximation and has been used in several recent works (e.g., \citealp{NPB20, wei2020learning, 
agarwal2020flambe}).

Even more relevant is the line of work considering adversarial rewards, initiated by \citet{10.1287/moor.1090.0396}, 
who consider online learning in continuing MDPs with full feedback about the rewards. They proposed a MDP-E algorithm, 
that achieves $\OO(\tau^2{\sqrt{T  \log K}})$ regret, where $\tau$ is an upper bound on the mixing time of the MDP. 
Later, \citet{NGS13} proposed an algorithm which guarantees $\widetilde{\mathcal{O}}\bpa{\sqrt{\tau^3{KT/\alpha}}}$ 
regret with bandit feedback, essentially assuming that all states are reachable with probability  $\alpha > 0$ under all 
policies. In our work, we focus on episodic MDPs with a fixed episode length $H$. The setting was first considered in 
the bandit setting by \citet{NGS10}, who proposed an algorithm with a regret bound of $\OO(H^2\sqrt{T K }/\alpha)$. 
Although the number of states does not appear explicitly in the bound, the regret scales at least linearly with the 
size of the state space $\ssp$, since $|\ssp|\le H/ \alpha$. Later work by \citet{NIPS20134974,DGySz14} eliminated the 
dependence on $\alpha$ and proposed an algorithm achieving $\widetilde{\mathcal{O}}( \sqrt{TH|\ssp|K})$ regret. Regret 
bounds for the full-information case without prior knowledge of the MDP were achieved by \citet{pmlr-v22-neu12} and 
\citet{pmlrv97rosenberg19a}, of order $\widetilde{\mathcal{O}}( H|\ssp|K\sqrt{T})$ and $\widetilde{\mathcal{O}}( 
H|\ssp|\sqrt{KT})$, respectively. These results were recently extended to handle bandit feedback about the rewards by 
\citet{JJLSY20}, ultimately resulting in a regret bound of $\widetilde{\mathcal{O}}( H|\ssp|\sqrt{KT})$.

As apparent from the above discussion, all work on online learning in MDPs with adversarial rewards considers finite state spaces. 
The only exception we are aware of is the recent work  of \citet{2019arXiv191205830C}, whose algorithm  OPPO is 
guaranteed to achieve $\widetilde{\mathcal{O}}\bpa{\sqrt{  d^3H^3 T} }$, assuming that the learner has access to 
$d$-dimensional features that can perfectly represent all action-value functions. While \citet*{2019arXiv191205830C} 
remarkably assumed no prior knowledge of the MDP parameters, their guarantees are only achieved in the full-information 
case. This is to be contrasted with our results that are achieved for the much more restrictive bandit setting, albeit 
with the stronger assumption of having full knowledge of the underlying MDP, as required by virtually all prior work 
in the bandit setting, with the exception of \citet{JJLSY20}.

Our results are made possible by a careful combination of recently proposed techniques for contextual bandit 
problems and optimal control in Markov decision processes. In particular, a core component of our algorithm is a 
regularized linear programming formulation of optimal control in MDPs due to \citet{qreps}, which allows us to reduce 
the task of computing near-optimal policies in linear MDPs to a low-dimensional convex optimization problem. A similar 
algorithm design has been previously used for tabular MDPs by \citet{NIPS20134974,DGySz14}, with the purpose 
of removing factors of $1/\alpha$ from the previous state-of-the-art bounds of \citet{NGS10}. Analogously to this 
improvement, our methodology enables us to make strong assumptions on problem-dependent constants like likelihood ratios 
between $\mu^*$ and $\mu_0$ or eigenvalues of the feature covariance matrices. Another important building block of our 
method is a version of the recently proposed Matrix Geometric Resampling procedure of \citet{2020NO} that enables us to 
efficiently estimate the reward functions. Incorporating these estimators in the algorithmic template of \citet{qreps} 
is far from straightforward and requires several subtle adjustments.

\paragraph{Notation.}
We use $\iprod{\cdot}{\cdot}$ to denote inner products in Euclidean space and by $\norm{\cdot}$ we denote the Euclidean
norm for vectors and the operator norm for matrices. For a symmetric positive definite matrix $A$, we use 
$\lambdamin(A)$  to denote its smallest eigenvalue. We write $\trace{A}$ for the trace of a matrix $A$ and use $A 
\succcurlyeq 0$ to denote that an operator $A$ is positive semi-definite, and we use $A \succcurlyeq B $ to denote $A-B 
\succcurlyeq 0$. For a $d$-dimensional vector $v$, we denote the corresponding $d\times d$ diagonal matrix by 
$\mbox{diag}(v)$. For a positive integer $N$, we use $[N]$ to denote the set of positive integers $\ev{1,2,\dots,N}$. 
Finally, we will denote the set of all probability distributions over any set $\X$ by $\Delta_{\X}$.

\section{Preliminaries}\label{sec:problem_def}
An episodic Markovian Decision Process (MDP), denoted by $M = (\ssp, \actions, H, P, r)$ 
is defined by a state space $\ssp$, action space $\mathcal{A}$, episode length $H \in 
\intg_{+}$, transition function $P:\ssp\times\A\ra \Delta_{\ssp}$ and a reward function 
$\loss:\ssp \times \mathcal{A} \to [0,1]$.
For convenience, we will assume that both $\ssp$ and $\A$ are finite sets, although we allow the state space $\ssp$ to 
be arbitrarily large. Without significant loss of generality, we will assume that the set of available actions is the 
same $\A$ in each state, with cardinality $|\A| = K$. Furthermore, without any loss of generality, we will assume that 
the MDP has a layered structure, satisfying the following conditions:
\begin{itemize}
	\item The state set $\ssp$ can be decomposed into $H$ disjoint sets: $\ssp = 
\cup_{h =1}^H\ssp_h $,
	\item $\ssp_1 = \{x_1 \}$ and $\ssp_H = \{x_H \}$  are  singletons, 
	\item transitions are only possible between consecutive layers, that is, for any $x_h\in 
\ssp_h$, the distribution $P(\cdot|x,a)$ is supported on $\ssp_{h+1}$ for all $a$ and $h\in[H-1]$.
\end{itemize}
These assumptions are common in the related literature (e.g., \citealp{NGS10,NIPS20134974, 
pmlrv97rosenberg19a}) and are not essential to our analysis; their primary role is simplifying our 
notation.

In the present paper, we consider an \emph{online learning} problem where the learner interacts 
with its environment in a sequence of episodes $t=1,2,\dots,T$, facing a different \emph{reward 
functions} $r_{t,1}, \dots r_{t,H+1}$ selected by a (possibly adaptive) adversary at the beginning of each episode 
$t$. Oblivious to the reward function chosen by the adversary, the learner starts interacting with the 
MDP in each episode from the initial state $X_{t, 1}= x_1$. At each consecutive step $h \in [H-1]$ 
within the episode, the learner observes the state $X_{t, h}$, picks an action $A_{t,h}$ and 
observes the reward $r_{t,h}(X_{t, h}, A_{t,h})$. Then, unless $h=H$, the learner moves to the 
next state $X_{t,h+1}$, which is generated from the distribution $P(\cdot|X_{t, h}, A_{t,h})$.
At the end of step $H$, the episode terminates and a new one begins. The aim of the learner is to select its actions so 
that the cumulative sum of rewards is as large as possible.

Our algorithm and analysis will make use of the concept of (stationary stochastic) \emph{policies} 
$\pi: \ssp \to \Delta_{\A}$. A policy $\pi$ prescribes a behaviour rule to the learner 
by assigning probability $\pi(a|x)$ to taking action $a$ at state $x$. Let $\tau^{\pi} = ((X_1,A_1),(X_2,A_2),\dots,(X_H, A_H))$ be a trajectory generated by following the policy $\pi$ through the MDP. Then, for any $x_h\in \ssp_h, a_h \in \actions$ we define the occupancy measure $\mu_h^{\pi}(x,a) = \PPpi{(x,a) \in \tau^{\pi}}$.
We will refer to the collection of these distributions across all layers $h$ as the \emph{occupancy measure} 
induced by $\pi$ and denote it as $\mu^{\pi} = \pa{\mu^\pi_1,\mu^\pi_2,\dots,\mu^\pi_H}$. 
 We will denote the set of all valid occupancy measures by 
$\mathcal{U}$ and note that this is a convex set, such that for every element $\mu \in \mathcal{U}$  the following set of linear constraints is satisfied:
\begin{align}\label{eq:primal_constraints}
&\sum_{a\in \actions} \mu_{h+1}(x,a) = \sum_{x', a' \in \ssp_{h} \times \actions} P(x|x',a')\mu_h(x',a'), \quad \forall 
x\in\ssp_{h+1}, h \in [H-1],
\end{align}
as well as $\sum_a \mu_1(x_1,a) = 1$.
From every valid occupancy measure $\mu$, a stationary stochastic policy $\pi = {\pi_1, \dots, \pi_{H-1}}$ can be 
derived as $\pi_{\mu,h}(a|x) = \mu_h(x,a)/\sum_{a'}\mu_h(x,a')$.
For each $h$, introducing the linear operators $E$ and $P$ through their action on a set state-action distribution $u_h$ as $(E\transpose 
u_h)(x) = \sum_{a\in \actions} u_h(x,a)$ and $(P_h\transpose u_h)(x) = \sum_{x',a' \in \ssp_h, \actions} P(x|x',a') 
u_h(x',a')$, the constraints can be simply 
written as $E\transpose \mu_{h+1} = P\transpose_h \mu_h$ for each $h$. We will use the inner product notation for the sum over the set of states and actions: $\siprod{\mu_h}{r_h} = \sum_{(x,a)\in (\ssp_h\times\A)} \mu_{h}(x,a)r_{t,h}(x,a)$.
Using this notation, we formulate our objective as selecting a sequence of policies $\pi_t$ for each episode $t$ in a 
way that it minimizes the \emph{total expected regret} defined as
\[
 \regret_T = \sup_{\pi^*} \sum_{t=1}^T \sum_{h=1}^H \pa{\EEs{r_{t,h}(X_h^*,A_h^*)}{\pi^*} - 
\EEs{r_t(X_{t,h},A_{t,h})}{\pi_t}} = \sup_{\mu^* \in \U} \sum_{t=1}^T \sum_{h=1}^H \iprod{\mu_h^* - 
\mu_{h}^{\pi_t}}{r_{t,h}},
\]
where the notations $\EEs{\cdot}{\pi^*}$ and $\EEs{\cdot}{\pi_t}$ emphasize that the state-action trajectories are 
generated by following policies $\pi^*$ and $\pi_t$, respectively. As the above expression suggests, we can reformulate 
our online learning problem as an instance of online linear optimization where in each episode $t$, the learner selects 
an occupancy measure $\mu_t\in\mathcal{U}$ (with $\mu_t = \mu^{\pi_t}$) and gains reward $\sum_{h=1}^H 
\siprod{\mu_{t,h}}{r_{t,h}}$.  Intuitively, the regret measures the gap between the total reward gained by the learner and 
that of the best stationary policy fixed in hindsight, with full knowledge of the sequence of rewards chosen by the 
adversary. This performance measure is standard in the related literature on online 
learning in MDPs, see, for example \citet{NGS10, NIPS20134974, pmlr-v22-neu12, pmlrv97rosenberg19a,2019arXiv191205830C}.

In this paper, we focus on MDPs with potentially enormous state spaces, which makes it difficult to design 
computationally tractable algorithms with nontrivial guarantees, unless we make some assumptions.
We particularly focus on the classic technique of relying on \emph{linear function approximation} and assuming
that the reward functions occurring during the learning process can be written as a linear function of a 
low-dimensional feature map. We specify the form of function approximation and the conditions our analysis requires as 
follows:
\begin{assumption}[Linear MDP with adversarial rewards]\label{ass:linmdp_adverse}
	There exists a feature map $\varphi: \ssp \times \actions \to \real^d$  and a collection of $d$ signed measures $m 
= (m_{1},\dots,m_{d})$ on $\ssp$, such that for any $(x, a) \in \ssp \times \actions$ the transition function can be 
written as 
\[
P(\cdot|x,a) = \iprod{m(\cdot)}{\varphi(x,a)}.
\]
Furthermore, the reward function chosen by the adversary in each episode $t$ can be written as
\[
r_{t,h}(x, a) = \iprod{\theta_{t,h}}{\varphi(x,a)}
\]
for some $\theta_{t,h} \in \real^d$. 
We assume that the features and the parameter vectors satisfy $\norm{\varphi(x,a)} \le \sigma$ and that the first coordinate $\varphi_1(x,a) 
= 1$ for all $(x,a) \in \ssp\times\actions$. Also we assume that $\norm{\theta_{t,h}} \le R$. 
\end{assumption}
Online learning under this assumption, but with a fixed reward function, has received substantial attention in the recent literature, particularly since 
the work of \citet{2019arXiv190705388J} who popularized the term ``Linear MDP'' to refer to this 
class of MDPs. 
This has quickly become a common assumption for studying reinforcement learning algorithms 
(\citet{ 2019arXiv191205830C, 2019arXiv190705388J, NPB20,  agarwal2020flambe}).   This is also a
special case of \emph{factored linear models} (\citet{factlin,PS16}).

Linear MDPs come with several attractive properties that allow efficient optimization and learning. In this work, we 
will exploit the useful property shown by \citet{NPB20} and \citet{qreps} that all occupancy measures in a 
linear MDP can be seen to satisfy a relaxed version of the constraints in Equation~\eqref{eq:primal_constraints}. 
Specifically, for all $h$, defining the feature matrix $\Phi_h\in\real^{(\ssp_h\times\A)\times d}$ with its action on the distribution 
$u$ as $\Phi_h\transpose u = \sum_{x,a\in \ssp_h, \actions} u_h(x,a) \varphi(x,a)$, we define $\mathcal{U}_\Phi$ as the 
set of 
state-action distributions $(\mu,u)=\pa{(\mu_1,\dots,\mu_H),(u_1,\dots,u_H)}$ satisfying the following 
constraints:
\begin{equation}\label{eq:primal_relaxed}
 E\transpose u_{h+1} = P\transpose_h \mu_h \quad (\forall h), \qquad \Phi_h\transpose u_h = \Phi_h\transpose \mu_h \quad 
(\forall h), \qquad E\transpose u_1 = 1.
\end{equation}

 It is easy to see that for all feasible $(\mu,u)$ pairs, $u$ satisfies the original 
constraints~\eqref{eq:primal_constraints} if the MDP satisfies Assumption~\ref{ass:linmdp_adverse}: since the 
transition operator can be written as $P_h = \Phi_h M_h$ for some matrix $M_h$. In this case, we clearly have 
\begin{equation}\label{eupu}
	E\transpose u_{h+1} = P_h\transpose \mu_h = M_h\transpose \Phi_h\transpose \mu_h = M_h\transpose \Phi_h\transpose u_h = 
	P\transpose_h u_h,
\end{equation}
showing that any feasible $u$ is indeed a valid occupancy measure. Furthermore, due to linearity of the rewards in 
$\Phi$, we also have $\iprod{u_h}{r_{t,h}} = \iprod{\mu_h}{r_{t,h}}$ for all feasible $(\mu,u)\in\U_\Phi$.
While the number of variables and constraints in Equation~\eqref{eq:primal_relaxed} is still very large, 
it has been recently shown that approximate linear optimization over this set can be performed tractably 
\citep{NPB20,qreps}. Our own algorithm design described in the next section will heavily build on these recent results.

\section{Algorithm and main results}\label{results}
This section presents our main contributions: a new efficient algorithm for the setting described above, along with its 
performance guarantees. Our algorithm design is based on a reduction to online linear optimization, exploiting the 
structural results established in the previous section. In particular, we will heavily rely on the algorithmic ideas 
established by \citet{qreps}, who proposed an efficient reduction of approximate linear optimization over the 
high-dimensional set $\mathcal{U}_\Phi$ to a low-dimensional convex optimization problem. Another key component of our 
algorithm is an efficient estimator of the reward vectors $\theta_{t,h}$ based on the work of \citet{2020NO}. For 
reasons that we will clarify in Section~\ref{sec:analysis}, accommodating these reward estimators into the framework of 
\citet{qreps} is not straightforward and necessitates some subtle changes. 

\subsection{The policy update rule}
Our algorithm is an instantiation of the well-known ``Follow the Regularized 
Leader'' (FTRL) template commonly used in the design of modern online learning methods (see, e.g., \citealp{Ora19}). We 
will make the following design choices:
\begin{itemize}
 \item The decision variables will be the vector $(\mu,u)\in\real^{2(\ssp\times\A)}$, with the feasible set $\U_\Phi^2$ 
defined through the constraints
\begin{equation}\label{eq:primal_relaxed_2}
 E\transpose u_h = P\transpose_h \mu_h \quad (\forall h), \qquad  \Phi_h \transpose \text{diag}(u_h) \Phi_h= 	 \Phi_h 
\transpose \text{diag}(\mu_h) \Phi_h \quad (\forall h).
\end{equation}
These latter constraints ensure that the feature covariance matrices under $u$ and $\mu$ will be identical, which is 
necessary for technical reasons that will be clarified in Section~\ref{sec:analysis}. 
Notice that, due to our assumption that $\varphi_1(x,a) = 1$, we have $\U_\Phi^2 \subseteq \U_{\Phi}$, so all feasible 
$u$'s continue to be feasible for the original constraints \eqref{eq:primal_constraints}.
 \item The regularization function will be chosen as $\frac 1\eta D(\mu\|\mu_0) + \frac 1\alpha \DC(u\|\mu_0)$ for some 
positive regularization parameters $\eta$ and $\alpha$, where $\mu_0$ is the occupancy measure induced by the uniform 
$\pi_0$ with $\pi_0(a|x) = \frac 1K$ for all $x,a$, and $D$ and $\DC$ are 
the marginal and conditional relative entropy functions respectively defined as $D(\mu\|\mu_0) = \sum_{h=1}^H 
D(\mu_h\|\mu_{0,h})$ and $\DC(\mu\|\mu_0) = \sum_{h=1}^H \DC(\mu_h\|\mu_{0,h})$ with
\begin{align*}
 D(\mu_h\|\mu_{0,h}) &= \sum_{(x,a)\in (\ssp_h\times\A)} \mu_h(x,a) \log \frac{\mu_h(x,a)}{\mu_{0,h}(x,a)}, 
\quad\mbox{and}
 \\
 \DC(\mu_h\|\mu_{0,h}) &= \sum_{(x,a)\in (\ssp_h\times\A)} \mu_h(x,a) \log \frac{\pi_{\mu,h}(a|x)}{\pi_{0,h}(a|x)}.
\end{align*}
\end{itemize}
With these choices, the updates of our algorithm in each episode will be given by
\begin{align}\label{opt}
\pa{\mu_t, u_t} &= \arg\max_{\pa{\mu, u} \in \mathcal{U}_\Phi^2} \bigg\{ \sum_{s=1}^{t-1} \sum_{h=1}^{H-1} \iprod{\mu_h}{ \widehat{r}_{s,h}  
}  - \frac{1}{\eta}D(\mu\| \mu_0 ) - \frac{1}{\alpha} \DC(u\| \mu_0) \bigg\}
\end{align}
where $\wh{r}_{t,h}\in\real^{\X\times\A}$ is an estimator of the reward function $r_{t,h}$ that will be defined shortly.

As written above, it is far from obvious if these updates can be calculated efficiently. The following result shows 
that, despite the apparent intractability of the maximization problem, it is possible to reduce the above problem into 
a $d^2$-dimensional unconstrained convex optimization problem:
\begin{proposition}\label{opt_solution}
	Define for each  $h \in[H-1]$, a matrix  $Z_h\in \real^{d\times d}$ and let  matrix $Z\in \real^{d\times d(H-1)}$ be defined as $Z = \pa{Z_1, \dots, Z_{H-1}}$. We will write $h(x) = h$, if $x \in \ssp_h$. Define the Q-function taking values $Q_Z(x,a) =  \varphi(x,a)\transpose Z_{h(x)} \varphi(x,a)$ and define the value function
	\begin{equation*}
		V_Z(x) = \frac{1}{\alpha}\log\pa{\sum_{a\in A(x)} \pi_0(a|x) e^{\alpha  Q_Z(x,a)}}
	\end{equation*}
	For any $h\in[H-1]$ and for any $x\in \mathcal{X}_h$, $a \in A(x)$, denote $P_{x,a}V_{Z} = \sum_{x' \in \mathcal{X}_{h(x)+1} }P(x'|x,a) V_{Z}(x')$ and  $\Delta_{t, Z}(x,a) =  \sum_{s=1}^{t-1}  \widehat{r}_{s,h(x) }(x,a) + P_{x,a}V_{Z} -  Q_Z(x,a) $. Then, the optimal solution of the optimization problem 
	(\ref{opt}) is given as 
	\begin{align*}
		\hpi_{t,h}(a|x)  &= \pi_0(a|x) e^{\alpha\pa{Q_{Z^*_{t}}(x,a) - V_{Z^*_{t}}(x)}},
		\\
		\hmu_{t,h}(x,a) &\propto \mu_0(x,a) e^{\eta\Delta_{t, Z^*_{t}}(x,a)},
	\end{align*}
	where $Z^*_t = (Z^*_{t,1}, \dots, Z^*_{t,H-1})$  is the minimizer of the convex function
	\begin{align}\label{potencial}
		\mathcal{G}_t( Z)= \frac{1}{\eta} \sum_{h=1}^{H-1} \log\pa{\sum_{x\in \mathcal{X}_h,a \in A(x)} \mu_0(x,a) e^{\eta \Delta_{t,Z}(x,a)}} +  V_{Z}(x_1).
	\end{align}
\end{proposition}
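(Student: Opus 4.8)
The optimization problem \eqref{opt} is a strictly concave maximization (both relative-entropy terms are strictly convex) over the convex set $\U_\Phi^2$; since the defining constraints are affine and $\mu_0$ is a strictly positive feasible point, strong duality holds and the plan is to recover the primal optimum from a Lagrangian dual. I would attach a value-function multiplier $V_{h+1}(\cdot)$ to each flow constraint $E\transpose u_{h+1} = P_h\transpose\mu_h$ and a \emph{matrix} multiplier $Z_h\in\real^{d\times d}$ to each feature-covariance constraint $\Phi_h\transpose\text{diag}(u_h)\Phi_h = \Phi_h\transpose\text{diag}(\mu_h)\Phi_h$. The algebraic fact that drives everything is
\[
 \trace{Z_h\transpose\Phi_h\transpose\text{diag}(w_h)\Phi_h} = \sum_{x,a} w_h(x,a)\,\varphi(x,a)\transpose Z_h\,\varphi(x,a) = \sum_{x,a} w_h(x,a)\,Q_Z(x,a),
\]
so the matrix constraint contributes the \emph{quadratic} form $Q_Z$ to the Lagrangian (a linear multiplier would only give a linear term), with coefficient $+Q_Z$ on $u_h$ and $-Q_Z$ on $\mu_h$. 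With these multipliers the Lagrangian splits into a block depending only on $u$ and a block depending only on $\mu$.

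I would treat the $u$-block first. Because the conditional entropy $\DC(u_h\|\mu_{0,h})$ depends on $u_h$ only through the induced policy $\pi_{u,h}(\cdot|x)$ and not through the state marginal, writing $u_h(x,a)=\nu_h(x)\pi_{u,h}(a|x)$ decouples the problem across states: for each $x$ the inner maximization of $\sum_a\pi(a|x)Q_Z(x,a)-\frac1\alpha\sum_a\pi(a|x)\log\frac{\pi(a|x)}{\pi_0(a|x)}$ over the simplex is a standard entropy-regularized linear program whose maximizer is the Gibbs policy $\hpi_{t,h}(a|x)=\pi_0(a|x)e^{\alpha(Q_Z(x,a)-V_Z(x))}$ and whose optimal value is the log-partition $V_Z(x)$, pinning down the normalizer in the definition of $V_Z$. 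What remains of the $u_h$-block is then the affine expression $\sum_x\nu_h(x)\bpa{V_Z(x)-V_h(x)}$ in the free marginal $\nu_h\ge 0$; finiteness of its supremum forces $V_h\ge V_Z$ pointwise, while the \emph{same} multiplier reappears in the $\mu_{h-1}$-block through $P_{x,a}V_h$, so minimizing the dual over $V_h$ drives $V_h=V_Z$ at the optimum. This soft-Bellman closure, identifying every flow multiplier with the soft value $V_Z$ of $Q_Z$, is the step I expect to be the main obstacle, and it is exactly what makes $P_{x,a}V_Z$ (rather than an arbitrary $P_{x,a}V_h$) appear inside $\Delta_{t,Z}$.

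For the $\mu$-block the coefficient of $\mu_h(x,a)$ is $\sum_{s}\hr_{s,h}(x,a)+P_{x,a}V_{h+1}-Q_Z(x,a)$, which equals $\Delta_{t,Z}(x,a)$ once $V_{h+1}=V_Z$. Here I would use the assumption $\varphi_1(x,a)=1$: the $(1,1)$-entry of the covariance constraint reads $\sum_{x,a}u_h(x,a)=\sum_{x,a}\mu_h(x,a)$, which together with $E\transpose u_1=1$ and mass conservation along the layers forces each $\mu_h$ to be normalized, so the first coordinate of $Z_h$ acts as the normalization multiplier. Consequently the $\mu_h$-maximization is an entropy-regularized linear program over the simplex, with maximizer $\hmu_{t,h}(x,a)\propto\mu_0(x,a)e^{\eta\Delta_{t,Z}(x,a)}$ and optimal value $\frac1\eta\log\bpa{\sum_{x,a}\mu_0(x,a)e^{\eta\Delta_{t,Z}(x,a)}}$; this is why a genuine softmax appears rather than an unnormalized exponential.

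Substituting both optimizers back, the $\mu$-blocks contribute $\frac1\eta\sum_{h=1}^{H-1}\log\bpa{\sum_{x,a}\mu_0(x,a)e^{\eta\Delta_{t,Z}(x,a)}}$, the singleton initial layer $\ssp_1=\{x_1\}$ contributes the boundary value $V_Z(x_1)$, and the remaining $u$-blocks vanish once $V_h=V_Z$, so that the dual objective is exactly $\GG_t(Z)$. Convexity of $\GG_t$ is automatic because it is a dual function, i.e.\ a pointwise supremum of affine functions of $Z$ (and can be checked directly, since $Q_Z$ and $\Delta_{t,Z}$ are affine in $Z$ while log-sum-exp is convex and nondecreasing). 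Strong duality then gives that the maximizer of \eqref{opt} is attained at the stated Gibbs policy and tilted occupancy measure evaluated at $Z^*_t=\arg\min_{Z}\GG_t(Z)$, which completes the argument.
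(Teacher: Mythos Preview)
Your proposal is correct and follows essentially the same Lagrangian-duality route as the paper: introduce multipliers $V_h$ for the flow constraints and matrix multipliers $Z_h$ for the feature-covariance constraints, solve the inner entropy-regularized maximizations to obtain the Gibbs policy and the tilted occupancy, identify $V_h=V_Z$, and substitute back to get $\mathcal{G}_t(Z)$. The only minor differences are that the paper carries a separate normalization multiplier $\rho_h$ for each $\mu_h$ (which you instead absorb into the $(1,1)$-entry of $Z_h$ via the assumption $\varphi_1\equiv 1$), and it obtains $V_h=V_Z$ by reading off the normalizer of the first-order-condition policy rather than through your finiteness-of-the-supremum-plus-dual-minimization argument; both mechanisms are equivalent.
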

A particular merit of this result is that it gives an explicit formula for the policy $\pi_t$ that induces the 
optimal occupancy measure $u_t$, and that $\pi_t(a|x)$ can be evaluated straightforwardly as a function of the features 
$\varphi(x,a)$ and the parameters $Z_t^*$. The proof of the result is based on Lagrangian duality, and mainly 
follows the proof of Proposition~1 in \citet{qreps}, with some subtle differences due to the episodic setting 
we consider and the appearance of the constraints  $\Phi_h \transpose \text{diag}(u_h) \Phi_h=\Phi_h \transpose 
\text{diag}(\mu_h) \Phi_h$. The proof is presented in Appendix~\ref{app:opt_solution}. 

The proposition above inspires a very straightforward implementation that is presented as Algorithm~\ref{alg:linexp3}. 
Due to the direct relation with the algorithm of \citet{qreps}, we refer to this method as \OQREPS, where \QREPS stands 
for ``Relative Entropy Policy Search with Q-functions''. \OQREPS adapts the general idea of \QREPS to the online 
setting in a similar way as the \OREPS algorithm of \citet{NIPS20134974} adapted the Relative Entropy Policy Search 
method of \citet{PMA10} to regret minimization in tabular MDPs with adversarial rewards. While \OREPS would in 
principle be still applicable to the large-scale setting we study in this paper and would plausibly achieve similar 
regret guarantees, its implementation would be nearly impossible due to the lack of the structural properties enjoyed 
by \OQREPS, as established in Proposition~\ref{opt_solution}.

\begin{algorithm}[h]
	\caption{\ftrlrl}
	\label{alg:linexp3}
	\textbf{Parameters:} $\eta, \alpha>0$, exploration parameter $\gamma \in (0,1)$,\\
	\textbf{Initialization:} Set $\htheta_{1,h} = 0$ for all $h$, compute $Z_1$.
	\\
	\textbf{For} $t = 1, \dots, T$, \textbf{repeat:}
	\begin{itemize}
		\item Draw $Y_t \sim \mbox{Ber}(\gamma)$,
	\item \textbf{For} $h = 1, \dots, H$, \textbf{do:}
	\begin{itemize}
		\item Observe $X_{t,h}$ and, for all $a \in \actions(X_{t,h})$, set 
		\[ 
		\pi_{t,h}(a|X_{t,h}) = \pi_{0,h}(a|X_{t,h}) e^{\alpha\pa{Q_{Z_t}(X_{t,h},a)  - V_{Z_t}(X_{t,h}) }},
		\]
		\item if $Y=0$, draw $A_{t,h}\sim \pi_{t,h}(\cdot|X_{t,h})$, otherwise draw $A_{t,h}\sim 
\pi_{0,h}(\cdot|X_{t,h})$,
		\item observe the reward $\loss_{t,h}(X_{t,h},A_{t,h})$.
	\end{itemize}
	\item Compute $\htheta_{t,1}, \dots, \htheta_{t,H-1}$,  $Z_{t+1}$.
	\end{itemize}
\end{algorithm}

\subsection{The reward estimator}
We now turn to describing the reward estimators $\wh{r}_{t,h}$, which will require several further definitions.
Specifically, a concept of key importance will be the following \emph{feature covariance matrix}:
\[
\Sigma_{t,h} = \EEs{\varphi(X_{t,h}, A_{t,h} )\varphi(X_{t,h}, A_{t,h})\transpose}{\pi_t}.
\]
Making sure that $\Sigma_{t,h}$ is invertible, we can define the estimator
\begin{equation}\label{eq:lse}
	 \ttheta_{t,h} = \Sigma_{t,h}^{-1} \varphi(X_{t,h}, A_{t,h})r_{t,h}(X_{t,h}, A_{t,h}).
\end{equation}
This estimate shares many similarities with the estimates that are broadly used in the literature on adversarial linear 
bandits \citep{MB04,AK04,NIPS2007_3371}. 
It is easy to see that $ \ttheta_{t,h}$ is an unbiased estimate of $ \theta_{t,h}$:
 \begin{align*}
 	\EEt{ \ttheta_{t,h}} &= \EEt{ \Sigma_{t,h}^{-1} \varphi(X_{t,h}, A_{t,h}) \varphi(X_{t,h}, , A_{t,h})\transpose \theta_{t,h}
 	   } = \Sigma_{t,h}^{-1}  \Sigma_{t,h} \theta_{t,h}   = \theta_{t,h}.
 \end{align*}
Unfortunately, exact computation of $\Sigma_{t,h}$ is intractable. To address this issue, we propose a method to 
directly estimate the inverse of the covariance matrix $\Sigma_{t,h}$ by adapting the Matrix Geometric Resampling method 
of \citet{2020NO} (which itself is originally inspired by the Geometric Resampling method of 
\citealp{NeuBartok13,NB16}).
Our adaptation has two parameters $\beta >0$ and $M \in \intg_+$, and generates an estimate of the inverse covariance matrix through the following procedure\footnote{The version we present here is a na\"ive implementation, optimized for readability. We present a more practical variant in Appendix~\ref{appendixD}}:
\vspace{0.15cm}

\makebox[\textwidth][c]{
	\fbox{
		\begin{minipage}[l]{0.8\textwidth}
		\raggedright
			\textbf{Matrix Geometric Resampling}
			\vspace{.1cm}
			\hrule
			\vspace{.1cm}
			\textbf{Input:} simulator of $P$, policy $\wt \pi_t = (\wt \pi_{t,1}, \dots, \wt \pi_{t, H-1})$.\\
			\textbf{For $i = 1, \dots, M$, repeat}:
			\begin{enumerate}
				\vspace{-1mm}
				\item Simulate a trajectory $\tau(i) = \{ (X_1(i), A_1(i) ), \dots,  (X_{H-1}(i), A_{H-1}(i) ) \}$, following 
the policy $\wt \pi_t$ in $P$,
				\vspace{-1mm}
				\item \textbf{For $h = 1, \dots, H-1$, repeat}:\\
				Compute
				\begin{enumerate}
					\item  $B_{i, h} =  \varphi(X_h(i), A_h(i))\varphi(X_h(i), A_h(i))\transpose $,
					\item  $C_{i,h} = \prod_{j=1}^i  (I - \beta B_{j, h}) $.
				\end{enumerate}
			\end{enumerate}
			\vspace{-1mm}
			\textbf{Return} $\widehat{\Sigma}^{+}_{t,h} = \beta I+ \beta \sum_{i=1}^M C_{i,h}$  for all $h \in [H-1]$.
		\end{minipage}
	}
}

\vspace{0.1cm}
Based on the above procedure, we finally define our estimator as
 \[
 \htheta_{t,h} =  \widehat{\Sigma}^{+}_{t,h} \varphi(X_{t,h}, A_{t,h})r_{t,h}(X_{t,h}, A_{t,h}).
 \] 
The idea of the estimate is based on the truncation of the Neumann-series expansion of the matrix $\Sigma_{t,h}^{-1}$ at the $M$th order term.  Then, for large enough $M$, the matrix $\Sigma^+_{t,h}$ is a good estimator of the inverse covariance matrix, which will be 
 quantified formally in the analysis. For more intuition on the estimate, see section 3.2. in  \citet{2020NO}. With a 
careful implementation explained in Appendix~\ref{appendixD}, $\htheta_{t, h}$ can be computed in $O(MHKd)$ time, using  
$M$ calls to the simulator.

\subsection{The regret bound}\label{sec:regret_bound}
We are now ready to state our main result: a bound on the expected regret of \OQREPS. During the analysis, we will 
suppose that all the optimization problems solved by the algorithm are solved up to an additive error of 
$\varepsilon\ge 0$. Furthermore, we will 
denote the covariance matrix generated by the uniform policy at layer $h$ as $\Sigma_{0,h}$, and make the 
following assumption:
\begin{assumption}\label{ass:uniform}
 The eigenvalues of $\Sigma_{0,h}$ for all $h$ are lower bounded by $\lambdamin > 0$. 
\end{assumption}
Our main result is the following guarantee regarding the performance of \ftrlrl:
\begin{theorem}\label{th_regret}
Suppose that the  MDP satisfies Assumptions~\ref{ass:linmdp_adverse} and \ref{ass:uniform} and $\lambda_{\min} > 0$. 
Furthermore, suppose that, for all $t$, $Z_t$ satisfies $\GG_t(Z_t) \le \min_Z \GG_t(Z) + \varepsilon$ for some 
$\varepsilon\ge 0$.
Then, for $\gamma\in (0,1)$, 
$M \ge 0$, any positive $\eta \le \frac{2}{(M+2) H}$ and any positive $\beta \le \frac{1}{2\sigma^2}$, the expected 
regret of \ftrlrl over $T$ episodes satisfies
\begin{align*}
		\regret_T \le& 2T \sigma R H\cdot \exp\pa{-\gamma\beta\lambdamin M} + \gamma  H T 
    + \eta HdT\frac{ 4}{3} +  \frac{1}{\eta}D(\mu^*\|\mu_{0}) + \frac{1}{\alpha}
\DC(u^*\|\mu_{0})
\\
&+ \sqrt{\alpha \varepsilon }(M+2) H T.
\end{align*}
Furthermore, letting $\beta = \frac{1}{2\sigma^2}$, $M = \left\lceil\frac{ 2 \sigma^2 \log (T H
	\sigma R)}{\gamma  \lambdamin}\right\rceil$, $\eta = \frac{1}{\sqrt{TdH }}$, $\alpha = \frac{1}{\sqrt{TdH }}$ and 
$\gamma = \frac{1}{\sqrt{TH}}$ and supposing that $T$ is large enough so 
	that the above constraints on $M, \gamma, \eta$ and $\beta$ are satisfied, we also have 
\begin{align*}
\regret_T \le&   \sqrt{dHT} \pa{2 + D(\mu^*\|\mu_0) + \DC(u^*\|\mu_0)} + \sqrt{HT} + \sqrt{ \varepsilon 
}T^{5/4}(Hd)^{1/4} + 2.
%
\end{align*}
\end{theorem}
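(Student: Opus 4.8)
The plan is to follow the standard Follow-the-Regularized-Leader analysis for online linear optimization over occupancy measures, isolating three error sources: the cost of forced exploration, the bias of the Matrix Geometric Resampling estimator, and the penalty-plus-stability decomposition of FTRL run on the estimated rewards. First I would relate the regret against the occupancy measure $\wt{\mu}_t = (1-\gamma)\mu_t + \gamma\mu_0$ of the executed policy to the regret against the FTRL occupancy measure $\mu_t$ returned by \eqref{opt}. Since each reward lies in $[0,1]$ and each layer marginal is a distribution, $\iprod{\mu_{t,h} - \mu_{0,h}}{r_{t,h}} \le 1$, so replacing $\wt{\mu}_t$ by $\mu_t$ costs at most $\gamma H$ per episode; this produces the $\gamma HT$ term and reduces the task to bounding $\EE{\sum_{t,h}\iprod{\mu_h^* - \mu_{t,h}}{r_{t,h}}}$.

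Next I would control the estimator bias. Writing $\overline{\Sigma}^+_{t,h} = \EEt{\hSp_{t,h}}$ and letting $\wt\Sigma_{t,h}$ be the feature covariance under the executed (hence resampling) policy $\wt{\pi}_t$, the central lemma is that the truncated Neumann series satisfies $\norm{I - \overline{\Sigma}^+_{t,h}\wt\Sigma_{t,h}} \le \exp(-\gamma\beta\lambdamin M)$. This uses Assumption~\ref{ass:uniform} together with the fact that mixing in a $\gamma$-fraction of the uniform policy forces $\wt\Sigma_{t,h} \succcurlyeq \gamma\Sigma_{0,h}$, so its smallest eigenvalue is at least $\gamma\lambdamin$; each factor $I - \beta B_{i,h}$ then contracts in conditional expectation by $(1-\beta\gamma\lambdamin)$, while $\beta \le 1/(2\sigma^2)$ and $\norm{\varphi}\le\sigma$ ensure $I - \beta B_{i,h} \succcurlyeq 0$ so that the geometric tail beyond order $M$ is controlled. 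Combined with $\EEt{\htheta_{t,h}} = \overline{\Sigma}^+_{t,h}\wt\Sigma_{t,h}\theta_{t,h}$ and the bounds $\norm{\varphi}\le\sigma$, $\norm{\theta_{t,h}}\le R$ of Assumption~\ref{ass:linmdp_adverse}, the per-step reward bias is at most $\sigma R\exp(-\gamma\beta\lambdamin M)$; summing over the $T$ episodes, the $H$ layers, and the two contributions from $\mu^*$ and $\mu_t$ gives the first term $2T\sigma R H\exp(-\gamma\beta\lambdamin M)$. After subtracting this bias and using that $\mu^*$ and $\mu_t$ are history-measurable, the remainder equals the FTRL regret $\EE{\sum_{t,h}\iprod{\mu_h^* - \mu_{t,h}}{\hr_{t,h}}}$ against the estimated rewards.

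For this FTRL regret I would apply the potential-based analysis, using that $(\mu_t,u_t)$ is the (approximate) maximizer in \eqref{opt} with composite regularizer $\tfrac1\eta D(\mu\|\mu_0) + \tfrac1\alpha\DC(u\|\mu_0)$. The penalty term contributes exactly $\tfrac1\eta D(\mu^*\|\mu_0) + \tfrac1\alpha\DC(u^*\|\mu_0)$. The stability term, controlled through the local norm of the negative-entropy regularizer, reduces up to constants to $\eta\sum_{t,h}\EE{\iprod{\mu_{t,h}}{\hr_{t,h}^2}}$; here the restriction $\eta \le 2/((M+2)H)$ is precisely what makes the single-step movement of the iterates small enough for the second-order expansion of the log-partition function to be valid, given that $\hr_{t,h}$ can have magnitude of order $M$. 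Bounding the conditional second moment by a constant multiple ($\tfrac43$) of $d$ per layer --- using $\varphi_1\equiv 1$, the covariance-matching constraint $\Phi_h\transpose\text{diag}(u_h)\Phi_h = \Phi_h\transpose\text{diag}(\mu_h)\Phi_h$ of \eqref{eq:primal_relaxed_2} which forces the trace identity $\trace{\wt\Sigma_{t,h}^{-1}\wt\Sigma_{t,h}} = d$, and the MGR variance control --- yields the term $\tfrac43\eta HdT$. Finally, propagating the $\varepsilon$-suboptimality of each $Z_t$ through the duality argument of Proposition~\ref{opt_solution} contributes the optimization-error term $\sqrt{\alpha\varepsilon}(M+2)HT$, and summing all pieces gives the first displayed bound.

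The main obstacle is the stability step. The estimator $\htheta_{t,h}$ is built from a product of $M+1$ correlated random matrices applied to a rank-one term, so its magnitude scales with $M$ and its second moment does not factor as transparently as in classical linear-bandit analyses; establishing the $\tfrac43 d$ variance bound and reconciling it with the step-size restriction $\eta\le 2/((M+2)H)$ in the second-order expansion of the FTRL potential is the delicate part, and is exactly where the covariance-matching constraints and the subtle adjustments to the framework of \citet{qreps} mentioned in the introduction are needed. Once the first bound is in place, the second follows by substituting $\beta = 1/(2\sigma^2)$, $M = \lceil 2\sigma^2\log(TH\sigma R)/(\gamma\lambdamin)\rceil$, $\eta = \alpha = 1/\sqrt{TdH}$, and $\gamma = 1/\sqrt{TH}$: the choice of $M$ makes $\exp(-\gamma\beta\lambdamin M) \le 1/(TH\sigma R)$, so the bias term becomes a constant, and collecting the remaining contributions of order $\sqrt{dHT}$, $\sqrt{HT}$, and $\sqrt{\varepsilon}T^{5/4}(Hd)^{1/4}$ gives the stated inequality.
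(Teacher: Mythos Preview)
Your high-level decomposition---exploration cost, estimator bias, FTRL penalty/stability, optimization error---matches the paper's proof of Theorem~\ref{th_regret}, and the bias computation and the parameter-substitution for the second bound are correct. However, two of the steps you describe as routine are in fact where the work lies, and your description of them is off.

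First, the role of the covariance-matching constraint in the variance bound is misidentified. The identity $\trace{\wt\Sigma_{t,h}^{-1}\wt\Sigma_{t,h}}=d$ is tautological and plays no role. What the paper actually uses (Lemma~\ref{quadratic}) is that $\hr_{t,h}^2(x,a)=\htheta_{t,h}\transpose\varphi(x,a)\varphi(x,a)\transpose\htheta_{t,h}$ is a \emph{quadratic} form in $\varphi$, so the constraint $\Phi_h\transpose\text{diag}(\hmu_{t,h})\Phi_h=\Phi_h\transpose\text{diag}(\hu_{t,h})\Phi_h$ gives $\langle\hmu_{t,h},\hr_{t,h}^2\rangle=\langle\hu_{t,h},\hr_{t,h}^2\rangle$. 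This lets one replace the FTRL $\mu$-iterate by the $u$-iterate, then by the \emph{approximate} $u_{t,h}$ (paying $(M+1)^2\|\hu_{t,h}-u_{t,h}\|_1$), and only then does $(1-\gamma)u_{t,h}\le \mu_{t,h}$ connect to the \emph{sampling} covariance $\Sigma_{t,h}$, after which the MGR second-moment bound $\EEt{\trace{\Sigma_{t,h}\hSp_{t,h}\Sigma_{t,h}\hSp_{t,h}}}\le \tfrac{4d}{3}$ applies. The first-order constraint $\Phi\transpose u=\Phi\transpose\mu$ would not suffice here; this is precisely the reason for the second-order constraint in \eqref{eq:primal_relaxed_2}.

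Second, the FTRL potential argument (Lemma~\ref{exp2base}) must be run on the \emph{exact} iterates $(\hmu_t,\hu_t)$, because the potential equals $\min_Z\GG_t(Z)$ and the telescoping relies on the exact minimizer. You cannot ``propagate the $\varepsilon$-suboptimality through the duality argument'' directly; instead the paper separately controls $\|\hu_{t,h}-u_{t,h}\|_1$ via Lemma~\ref{lem:propagation}, whose proof shows that $\DC(\hu_t\|u_t)\le\alpha\varepsilon$ (a strong-duality/three-point argument) and then uses a chain-rule bound $D(\hu_{t,h}\|u_{t,h})\le\sum_{k\le h}\DC(\hu_{t,k}\|u_{t,k})$ together with Pinsker. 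This total-variation error enters both the regret decomposition \eqref{eq:almostbound} and the variance term above, and the combination is what produces the $\sqrt{\alpha\varepsilon}(M+2)HT$ term.
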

Thus, when all optimization problems are solved up to precision $\varepsilon = T^{-3/2}$, the regret of \OQREPS is 
guaranteed to be of $\OO\bpa{\sqrt{dHT D(\mu^*\|\mu_0)}}$.

\subsection{Implementation}\label{sec:implementation}
While Proposition~\ref{opt_solution} establishes the form of the ideal policy updates $\pi_t$ through the solution of 
an unconstrained convex optimization problem, it is not obvious that this optimization problem can be 
solved efficiently. Indeed, one immediate challenge in optimizing $\GG_t$ is that its gradient takes the form
\[
 \nabla \GG_t(Z) = \sum_{x,a} \tmu_Z(x,a) \pa{\varphi(x,a)\varphi(x,a)\transpose - \sum_{x',a'} P(x'|x,a) 
\pi_Z(a'|x') \varphi(x',a')\varphi(x',a')\transpose},
\]
where $\tmu_Z(x,a) = \frac{\mu_0(x,a) \exp(\eta \Delta_Z(x,a))}{\sum_{x',a'}\mu_0(x',a')  \exp(\eta \Delta_Z(x',a'))}$.
Sampling from this latter distribution (and thus obtaining unbiased estimators of $\nabla \GG_t(Z)$) is 
problematic due to the intractable normalization constant.

This challenge can be addressed in a variety of ways. First, one can estimate the gradients via weighted importance 
sampling from the distribution $\tmu_Z$ and using these in a stochastic optimization procedure. This approach has been 
recently proposed and analyzed for an approximate implementation of REPS by \citet{PLBN21}, who showed that it results 
in $\varepsilon$-optimal policy updates given polynomially many samples in $1/\varepsilon$.  Alternatively, one can 
consider an empirical counterpart of the loss function replacing the expectation with respect to $\mu_0$ with an 
empirical average over a number of i.i.d.~samples drawn from the same distribution. The resulting loss function can then 
be optimized via standard stochastic optimization methods. This approach has been proposed and analyzed by 
\citet{qreps}. We describe the specifics of this latter approach in Appendix~\ref{app:implementation}.

\section{Analysis}\label{sec:analysis}
This section gives the proof of Theorem~\ref{th_regret} by stating the main technical results as lemmas and putting 
them together to obtain the final bound. 
In the first part of the proof, we show the upper bound on the auxiliary regret minimization game with general reward 
inputs and ideal updates. Then, we relate this quantity to the true expected regret by taking into account the 
properties of our reward estimates and the optimization errors incurred when calculating the updates. The proofs of all 
the lemmas are deferred to Appendix~\ref{app:proofs1}.

We start by defining the idealized updates $(\hmu_t,\hu_t)$ obtained by solving the update steps in 
Equation~\eqref{opt} exactly, and we let $u_t$ be the occupancy measure induced by policy $\pi_t$ that is based on the 
near-optimal parameters $Z_t$ satisfying $\GG_t(Z_t) \le \min_Z \GG_t(Z) + \varepsilon$. We will also let $\mu_t$ 
be the occupancy measure resulting from mixing $u_t$ with the exploratory distribution $\mu_0$ and note that $\mu_{t,h} = 
(1-\gamma) u_{t,h} + \gamma \mu_{t,h}$.
Using this notation, we will 
consider an auxiliary online learning problem with the sequence of reward functions given as $\hr_{t,h}(x,a) = 
\siprod{\varphi(x,a)}{\htheta_{t,h}}$, and study the performance of the idealized sequence $(\hmu_t,\hu_t)$ therein:
\[
\hregret_{T} = \sum_{t=1}^T\sum_{h=1}^{H-1}  \siprod{\mu^*_h - \hu_{t,h}}{\wh r_{t,h}}.
\]
Our first lemma bounds the above quantity:
\begin{lemma}\label{exp2base}
Suppose that $\htheta_{t,h}$ is such that $\big|\eta \cdot \siprod{\varphi(x,a)}{\htheta_{t,h}} \big| < 1$ holds for all 
$x,a$. 
Then, the auxiliary regret satisfies
	\begin{align*}
	\hregret_{T}   &\le   \eta \sum_{t=1}^T \sum_{h=1}^{H-1}\siprod{\hmu_{t,h}}{\wh r_{t,h}^2}+ 
\frac{1}{\eta}D(\mu^*\|\mu_{0}) + \frac{1}{\alpha}\DC(u^*\|\mu_{0}).
\end{align*}
\end{lemma}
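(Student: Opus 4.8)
The plan is to read the idealized sequence $(\hmu_t,\hu_t)$ as the iterates of a Follow-the-Regularized-Leader (FTRL) scheme on the joint variable $w=(\mu,u)$ over the feasible set $\U_\Phi^2$, with linear rewards loading only on the $\mu$-block and the separable regularizer $R(\mu,u)=\tfrac1\eta D(\mu\|\mu_0)+\tfrac1\alpha\DC(u\|\mu_0)$, and then to run the classical entropic-regularization regret argument. The first step is a reduction that removes $u$ from the objective we must bound. Every feasible $(\mu,u)\in\U_\Phi^2$ satisfies the feature-matching constraint $\Phi_h\transpose\text{diag}(u_h)\Phi_h=\Phi_h\transpose\text{diag}(\mu_h)\Phi_h$ from \eqref{eq:primal_relaxed_2}; reading off its first row and using $\varphi_1\equiv1$ gives $\Phi_h\transpose u_h=\Phi_h\transpose\mu_h$, and since $\wh r_{t,h}(x,a)=\siprod{\varphi(x,a)}{\htheta_{t,h}}$ is linear in the features, this yields $\siprod{\hu_{t,h}}{\wh r_{t,h}}=\siprod{\hmu_{t,h}}{\wh r_{t,h}}$. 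Thus $\hregret_T=\sum_{t=1}^T\sum_{h=1}^{H-1}\siprod{\mu_h^*-\hmu_{t,h}}{\wh r_{t,h}}$ is genuinely the online-linear-optimization regret of the $\mu$-marginals against a comparator pair $(\mu^*,u^*)\in\U_\Phi^2$, where I take the feasible choice $u^*=\mu^*$ (the pair $(\mu^*,\mu^*)$ meets both the flow and the feature-matching constraints).

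Next I would apply the Follow-the-Leader--Be-the-Leader inequality to the exact maximizers $(\hmu_t,\hu_t)=\arg\max_{(\mu,u)\in\U_\Phi^2}\{\sum_{s=1}^{t-1}\sum_{h=1}^{H-1}\siprod{\mu_h}{\wh r_{s,h}}-R(\mu,u)\}$, which gives
\[
\hregret_T \le R(\mu^*,u^*)-R(\hmu_1,\hu_1)+\sum_{t=1}^T\sum_{h=1}^{H-1}\siprod{\wh r_{t,h}}{\hmu_{t+1,h}-\hmu_{t,h}}.
\]
Since both relative entropies are nonnegative and $(\mu_0,\mu_0)\in\U_\Phi^2$ is the (zero-value) minimizer of $R$, we have $(\hmu_1,\hu_1)=(\mu_0,\mu_0)$ and $R(\hmu_1,\hu_1)=0$, while $R(\mu^*,u^*)=\tfrac1\eta D(\mu^*\|\mu_0)+\tfrac1\alpha\DC(u^*\|\mu_0)$ already supplies the two comparator-complexity terms in the statement. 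It then remains only to bound the stability (one-step drift) term by $\eta\sum_{t}\sum_{h}\siprod{\hmu_{t,h}}{\wh r_{t,h}^2}$.

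This is the standard stability estimate for entropic regularization, and it is where the boundedness hypothesis $|\eta\,\siprod{\varphi(x,a)}{\htheta_{t,h}}|=|\eta\,\wh r_{t,h}(x,a)|<1$ is consumed: through the Gibbs form of $\hmu$ the one-step change of the iterate is essentially a reweighting by $e^{\eta\wh r_{t,h}}$, and the elementary inequality $e^{x}\le1+x+x^2$ for $|x|\le1$ together with $\log(1+z)\le z$ converts the resulting log-partition increment $\tfrac1\eta\log\EEs{e^{\eta\wh r_{t,h}}}{\hmu_{t,h}}$ into $\siprod{\hmu_{t,h}}{\wh r_{t,h}}+\eta\,\siprod{\hmu_{t,h}}{\wh r_{t,h}^2}$; summing over $t$ and $h$ bounds the drift by $\eta\sum_{t}\sum_{h}\siprod{\hmu_{t,h}}{\wh r_{t,h}^2}$. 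The coefficient $\eta$ (rather than $\eta/2$) simply reflects the use of this cruder estimate in place of a second-order Taylor expansion. Combined with the previous paragraph, this closes the argument.

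The hard part will be making this last step rigorous in the presence of the MDP constraints. Unlike plain exponential weights on a simplex, the $\mu$-marginal returned by \eqref{opt} is \emph{not} simply $\mu_0\,e^{\eta\sum_{s<t}\wh r_{s}}$: by Proposition~\ref{opt_solution} it carries the dual correction $P_{x,a}V_{Z_t^*}-Q_{Z_t^*}(x,a)$ induced by the flow and feature-matching constraints, and its normalization couples the layers through $\GG_t$. The careful step is therefore to show that these constraint-induced corrections and the $u$-block of the regularizer contribute nothing positive to the drift term, so that the clean per-layer second-moment bound above still governs it. I expect to handle this by leaning on the separability of $R$ and on the fact that the rewards are linear in the $\mu$-coordinate only, which confines all reward-dependent variation of the FTRL iterates to the $\mu$-marginal and lets the softmax/log-partition computation proceed one layer at a time.
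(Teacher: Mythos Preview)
Your high-level strategy is sound: the reduction $\siprod{\hu_{t,h}}{\wh r_{t,h}}=\siprod{\hmu_{t,h}}{\wh r_{t,h}}$ via the first row of the feature-covariance constraint is correct, and the FTL--BTL decomposition you write down is equivalent at that level to the paper's potential-function argument. The gap is in the stability step. Your claim that ``the one-step change of the iterate is essentially a reweighting by $e^{\eta\wh r_{t,h}}$'' is exactly where the argument breaks: by Proposition~\ref{opt_solution}, $\hmu_{t+1,h}$ is a Gibbs measure whose tilt contains $P_{x,a}V_{Z_{t+1}^*}-Q_{Z_{t+1}^*}(x,a)$, while $\hmu_{t,h}$ carries $P_{x,a}V_{Z_t^*}-Q_{Z_t^*}(x,a)$, and these dual corrections move with $t$. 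Your proposed remedy---separability of $R$ and the fact that rewards load only on the $\mu$-block---controls how $R$ factors, not how the constrained optimum moves; it says nothing about the contribution of $Z_{t+1}^*-Z_t^*$ to the drift $\siprod{\hmu_{t+1,h}-\hmu_{t,h}}{\wh r_{t,h}}$. With only $\tfrac1\eta$-strong convexity of $D(\cdot\|\mu_0)$ you would recover at best $\eta\sum_h\infnorm{\wh r_{t,h}}^2$, which is strictly weaker than the local-norm term $\eta\sum_h\siprod{\hmu_{t,h}}{\wh r_{t,h}^2}$ the lemma asserts (and which the downstream Lemma~\ref{quadratic} relies on).

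The paper resolves this on the dual side. Since $\Psi_t=\min_Z\GG_t(Z)=\GG_t(Z_t^*)$, the single inequality $\GG_{t+1}(Z_{t+1}^*)\le\GG_{t+1}(Z_t^*)$ \emph{freezes} the dual variable at $Z_t^*$; once frozen, the correction $P_{x,a}V_{Z_t^*}-Q_{Z_t^*}(x,a)$ cancels in the ratio and
\[
\Psi_{t+1}-\Psi_t \;\le\; \GG_{t+1}(Z_t^*)-\GG_t(Z_t^*)
\;=\;\sum_{h=1}^{H-1}\frac1\eta\log\!\sum_{x,a}\hmu_{t,h}(x,a)\,e^{\eta\wh r_{t,h}(x,a)},
\]
after which your elementary inequalities $e^z\le1+z+z^2$ (for $|z|\le1$) and $\log(1+z)\le z$ finish the job. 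This is precisely the ``constraint-induced corrections contribute nothing positive'' step you anticipated, but the mechanism is the minimization over $Z$, which is invisible from the primal FTL--BTL side. To repair your argument, either switch to the potential/dual presentation, or insert the ghost iterate $\tilde\mu_{t+1,h}\propto\hmu_{t,h}\,e^{\eta\wh r_{t,h}}$ and prove $\sum_h\siprod{\hmu_{t+1,h}-\tilde\mu_{t+1,h}}{\wh r_{t,h}}\le0$---which again reduces to $\GG_{t+1}(Z_{t+1}^*)\le\GG_{t+1}(Z_t^*)$.
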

While the proof makes use of a general potential-based argument commonly used for analyzing FTRL-style algorithms, it 
involves several nontrivial elements exploiting the structural results concerning \OQREPS proved in 
Proposition~\ref{opt_solution}. In particular, these properties enable us to upper bound the potential differences in a 
particularly simple way. 
The main term on contributing to the regret $\hregret_T$ can be bounded as follows:
\begin{lemma}\label{quadratic} 
Suppose that $\varphi(X_{t,h},a)$ is satisfying $\twonorm{\varphi(X_{t,h}, a)} \le \sigma $ for any $a$, $0 < \beta 
\le \frac{1}{2\sigma^2}$ and $M > 0$. Then for each $t$ and $h$,
	\begin{align*}
\EEt{\siprod{\hmu_{t,h}}{\wh r_{t,h}^2} } \le \frac{4 d}{3(1-\gamma)} + (M+1)^2\onenorm{\hu_{t,h} - u_{t,h}}.
	\end{align*}
\end{lemma}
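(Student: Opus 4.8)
The plan is to rewrite the quadratic objective as a trace, use the feature-covariance matching constraint \eqref{eq:primal_relaxed_2} to pass from the marginal measure $\hmu_{t,h}$ to the conditional measure $\hu_{t,h}$ at no cost, peel off a term controlled by $\onenorm{\hu_{t,h}-u_{t,h}}$, and finally reduce the remaining quantity to a pure second-moment bound on the resampling estimator. Throughout, $\EEt{\cdot}$ conditions on the history before episode $t$, under which $\hmu_t,\hu_t,u_t$ are deterministic and the episode-$t$ trajectory is independent of the resampling randomness (which uses a separate simulator). First I would write $\hr_{t,h}(x,a)^2 = \varphi(x,a)\transpose\htheta_{t,h}\htheta_{t,h}\transpose\varphi(x,a)$, so that
\[
\siprod{\hmu_{t,h}}{\hr_{t,h}^2} = \trace{\Phi_h\transpose\text{diag}(\hmu_{t,h})\Phi_h\,\htheta_{t,h}\htheta_{t,h}\transpose}.
\]
Since $(\hmu_t,\hu_t)$ is feasible for \eqref{eq:primal_relaxed_2}, the two feature second-moment matrices coincide, $\Phi_h\transpose\text{diag}(\hmu_{t,h})\Phi_h=\Phi_h\transpose\text{diag}(\hu_{t,h})\Phi_h$ — this is exactly the role of that constraint — so I may replace $\hmu_{t,h}$ by $\hu_{t,h}$. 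Splitting $\hu_{t,h}=u_{t,h}+(\hu_{t,h}-u_{t,h})$ and applying H\"older's inequality to the second piece gives $\siprod{\hu_{t,h}-u_{t,h}}{\hr_{t,h}^2}\le\onenorm{\hu_{t,h}-u_{t,h}}\cdot\infnorm{\hr_{t,h}^2}$. Here the almost-sure bound $\opnorm{\widehat{\Sigma}^+_{t,h}}\le\beta(M+1)$ (each factor $I-\beta B_{i,h}$ has operator norm at most $1$ because $0\preccurlyeq\beta B_{i,h}\preccurlyeq\tfrac12 I$) together with $\twonorm{\varphi}\le\sigma$, $\loss\le1$ and $\beta\sigma^2\le\tfrac12$ yields $|\hr_{t,h}(x,a)|\le\beta\sigma^2(M+1)\le\tfrac12(M+1)$, hence $\infnorm{\hr_{t,h}^2}\le(M+1)^2$, producing the second term of the claimed bound.

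The remaining contribution is $\EEt{\siprod{u_{t,h}}{\hr_{t,h}^2}}=\EEt{\trace{\Phi_h\transpose\text{diag}(u_{t,h})\Phi_h\,\htheta_{t,h}\htheta_{t,h}\transpose}}$. Using the mixture identity $\mu_{t,h}=(1-\gamma)u_{t,h}+\gamma\mu_{0,h}$ I would write $u_{t,h}=\tfrac{1}{1-\gamma}(\mu_{t,h}-\gamma\mu_{0,h})$, so that the feature covariance under $u_{t,h}$ equals $\tfrac{1}{1-\gamma}(\Sigma_{t,h}-\gamma\Sigma_{0,h})\preccurlyeq\tfrac{1}{1-\gamma}\Sigma_{t,h}$, where $\Sigma_{t,h}$ is the feature covariance under the behaviour distribution (the matrix whose inverse the resampling targets, so that $\EE[B_{i,h}]=\Sigma_{t,h}$). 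Since $\htheta_{t,h}\htheta_{t,h}\transpose\succcurlyeq0$, this produces the factor $\tfrac{1}{1-\gamma}$ and reduces the task to showing $\EEt{\htheta_{t,h}\transpose\Sigma_{t,h}\htheta_{t,h}}\le\tfrac{4d}{3}$. Taking the episode expectation first and using $\EEt{\loss_{t,h}^2\,\varphi\varphi\transpose}\preccurlyeq\EEt{\varphi\varphi\transpose}=\Sigma_{t,h}$ (valid since $\loss\le1$, with $\varphi=\varphi(X_{t,h},A_{t,h})$) together with the fact that conjugation preserves the positive-semidefinite order, I obtain
\[
\EEt{\htheta_{t,h}\transpose\Sigma_{t,h}\htheta_{t,h}}\le\EE\left[\trace{\Sigma_{t,h}\widehat{\Sigma}^+_{t,h}\Sigma_{t,h}(\widehat{\Sigma}^+_{t,h})\transpose}\right],
\]
where the outer expectation is over the resampling only.

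The crux — and the step I expect to be hardest — is bounding this resampling second moment by $\tfrac{4d}{3}$. Writing $\widehat{\Sigma}^+_{t,h}=\beta\sum_{i=0}^{M}C_{i,h}$ with $C_{0,h}=I$ and expanding the double sum, the key structural observation is that for $i\le j$ one can factor $C_{j,h}=C_{i,h}E_{i,j}$, where $E_{i,j}$ is built from a disjoint block of resampling steps and is therefore independent of $C_{i,h}$ with $\EE[E_{i,j}]=(I-\beta\Sigma_{t,h})^{j-i}$. Summing the resulting geometric series in $I-\beta\Sigma_{t,h}$ and using $\sum_{k\ge1}(I-\beta\Sigma_{t,h})^k\preccurlyeq\tfrac1\beta\Sigma_{t,h}^{-1}$ collapses the expression to $2\beta\sum_{i=0}^M\trace{\Psi_i}$, where $\Psi_i=\EE[C_{i,h}\transpose\Sigma_{t,h}C_{i,h}]$ obeys the order-preserving linear recursion $\Psi_i=\EE[(I-\beta B)\Psi_{i-1}(I-\beta B)]$ with $\Psi_0=\Sigma_{t,h}$ (here $B$ is a generic resampling sample with $\EE[B]=\Sigma_{t,h}$). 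I would then control $\Theta_M=\sum_{i=0}^M\Psi_i$ by a fixed-point (super-solution) argument: using the rank-one identity $\EE[B^2]\preccurlyeq\sigma^2\Sigma_{t,h}$ and $\beta\sigma^2\le\tfrac12$ one checks that the monotone map $Y\mapsto\Sigma_{t,h}+\EE[(I-\beta B)Y(I-\beta B)]$ sends $\tfrac{2}{3\beta}I$ to a matrix $\preccurlyeq\tfrac{2}{3\beta}I$, whence monotone induction gives $\Theta_M\preccurlyeq\tfrac{2}{3\beta}I$ and $2\beta\,\trace{\Theta_M}\le\tfrac{4d}{3}$. This is precisely where the constant $\tfrac43$ and the requirement $\beta\le\tfrac{1}{2\sigma^2}$ enter, through the contraction factor $2-\beta\sigma^2\ge\tfrac32$; keeping the full non-commuting matrix recursion under control — rather than reducing to a per-eigenvalue scalar computation — without leaking any dependence on the eigenvalues of $\Sigma_{t,h}$ is the delicate part of the argument.
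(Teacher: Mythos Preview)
Your proposal is correct and its overall architecture matches the paper's proof: the trace rewrite, the use of the second-moment constraint \eqref{eq:primal_relaxed_2} to pass from $\hmu_{t,h}$ to $\hu_{t,h}$, the H\"older split yielding the $(M+1)^2\onenorm{\hu_{t,h}-u_{t,h}}$ term, the $(1-\gamma)^{-1}$ factor from the mixture relation, and the reduction (via $r^2\le 1$) to $\EEt{\trace{\Sigma_{t,h}\,\widehat\Sigma^+_{t,h}\,\Sigma_{t,h}\,(\widehat\Sigma^+_{t,h})\transpose}}$ are all exactly as in the paper.

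The one genuine difference is how you bound this last resampling second moment. The paper expands $\widehat\Sigma^+_{t,h}=\beta\sum_k C_{k,h}$, splits the double sum into diagonal terms and off-diagonal terms with $j>k$, and controls each separately via the contraction inequality $\EE[(I-\beta B)S(I-\beta B)]\preccurlyeq S(I-\beta(2-\beta\sigma^2)\Sigma_{t,h})$ (applied inductively to matrices $S$ that commute with $\Sigma_{t,h}$), summing the resulting geometric series to get $d/3$ and $d$ respectively. Your route instead collapses the off-diagonal tails by the same independence/geometric-series trick, but then bounds $\Theta_M=\sum_i\Psi_i$ in one shot via a monotone fixed-point argument for the affine map $Y\mapsto\Sigma_{t,h}+\EE[(I-\beta B)Y(I-\beta B)]$, checking that $\tfrac{2}{3\beta}I$ is a super-solution. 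This is slightly slicker: it avoids the diagonal/off-diagonal case split, never invokes commutativity with $\Sigma_{t,h}$, and makes transparent where the constant $4/3$ and the condition $\beta\sigma^2\le\tfrac12$ enter. Both arguments land on the same $\tfrac{4d}{3}$ with no dependence on the eigenvalues of $\Sigma_{t,h}$.
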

The proof of this claim makes heavy use of the fact that $\siprod{\hmu_{t,h}}{\wh r_{t,h}^2} = \siprod{\hu_{t,h}}{\wh 
r_{t,h}^2}$, which is ensured by the construction of the reward estimator $\hr_{t,h}$ and the constraints on the 
feature covariance matrices in Equation~\eqref{eq:primal_relaxed_2}. This property is not guaranteed to hold under the 
first-order constraints~\eqref{eq:primal_relaxed} used in the previous works of \citet{NPB20} and \citet{qreps}, which 
eventually justifies the higher complexity of our algorithm. 

It remains to relate the auxiliary regret to the actual regret. The main challenge is accounting for the 
mismatch between $\mu_t$ and $u_t$, and the bias of $\hr_t$, denoted as $b_{t,h}(x,a) = \EEt{\hr_{t,h}(x,a)} - 
r_{t,h}(x,a)$. 
To address these issues, we observe that for any $t,h$, we have
\begin{align*}
 \iprod{\mu_{t,h}}{r_{t,h}} &= \iprod{(1-\gamma) u_{t,h} + \gamma \mu_{0,h}}{r_{t,h}}  =
 \iprod{(1-\gamma) \hu_{t,h} + \gamma \mu_{0,h}}{r_{t,h}} + (1-\gamma)\iprod{u_{t,h}-\hu_{t,h}}{r_{t,h}}
 \\
 &\ge 
 \EEt{\iprod{(1-\gamma) \hu_{t,h} + \gamma \mu_{0,h}}{\hr_{t,h}}} + \infnorm{b_{t,h}} + 
(1-\gamma)\onenorm{u_{t,h}-\hu_{t,h}},
\end{align*}
where in the last step we used the fact that $\infnorm{r_{t,h}} \le 1$.
After straightforward algebraic manipulations, this implies that the regret can be bounded as
\begin{align}\label{eq:almostbound}
\regret_T &\le (1-\gamma)\EE{\hregret_T} + \sum_{t=1}^T \sum_{h=1}^H \EE{\gamma \iprod{\mu_{0,h} - \mu^*_h}{r_{t,h}}
+ \onenorm{\hu_{t,h} - u_{t,h}} + \infnorm{b_{t,h}}}.
\end{align}

In order to proceed, we need to verify the condition $\big|\eta \cdot \siprod{\varphi(x,a)}{\htheta_{t,h}} \big| < 1$ 
so that we can apply Lemma~\ref{exp2base} to bound $\hregret_T$. This is done in the following lemma:
\begin{lemma}\label{lem:bounded}
Suppose that $\eta \le \frac{2}{(M+2)}$. Then, for all, $t,h$, the reward estimates satisfy 
$\eta \infnorm{\hr_{t,h}} < 1$.
\end{lemma}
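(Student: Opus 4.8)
The plan is to reduce the claim to a bound on the operator norm of the Matrix Geometric Resampling output $\widehat{\Sigma}^{+}_{t,h}$, and then combine it with the step-size condition $\beta \le \frac{1}{2\sigma^2}$ and the hypothesis $\eta \le \frac{2}{(M+2)}$.

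First I would expand the definition $\hr_{t,h}(x,a) = \siprod{\varphi(x,a)}{\htheta_{t,h}}$ using $\htheta_{t,h} = \widehat{\Sigma}^{+}_{t,h}\varphi(X_{t,h},A_{t,h})r_{t,h}(X_{t,h},A_{t,h})$ and apply Cauchy--Schwarz. Invoking the feature bound $\norm{\varphi(x,a)} \le \sigma$ from Assumption~\ref{ass:linmdp_adverse} together with the reward bound $|r_{t,h}| \le 1$, this yields, uniformly over $(x,a)$,
\[
 |\hr_{t,h}(x,a)| \le \sigma^2 \opnorm{\widehat{\Sigma}^{+}_{t,h}},
\]
so it suffices to show $\opnorm{\widehat{\Sigma}^{+}_{t,h}} \le \beta(M+1)$.

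The key step is controlling the operator norm of $\widehat{\Sigma}^{+}_{t,h} = \beta I + \beta\sum_{i=1}^M C_{i,h}$, where $C_{i,h} = \prod_{j=1}^i (I - \beta B_{j,h})$. Each $B_{j,h} = \varphi(X_h(j),A_h(j))\varphi(X_h(j),A_h(j))\transpose$ is a rank-one positive semidefinite matrix whose single nonzero eigenvalue is $\norm{\varphi(X_h(j),A_h(j))}^2 \le \sigma^2$. Hence $I - \beta B_{j,h}$ is symmetric with eigenvalues in $[1 - \beta\sigma^2, 1]$, and since $\beta \le \frac{1}{2\sigma^2}$ these lie in $[\tfrac12,1] \subseteq [0,1]$, giving $\opnorm{I - \beta B_{j,h}} \le 1$. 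By submultiplicativity of the operator norm, $\opnorm{C_{i,h}} \le \prod_{j=1}^i \opnorm{I - \beta B_{j,h}} \le 1$ for every $i$, and therefore
\[
 \opnorm{\widehat{\Sigma}^{+}_{t,h}} \le \beta + \beta \sum_{i=1}^M \opnorm{C_{i,h}} \le \beta(M+1).
\]

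Combining the two displays gives $\infnorm{\hr_{t,h}} \le \sigma^2 \beta (M+1) \le \tfrac{M+1}{2}$, again using $\beta \le \frac{1}{2\sigma^2}$. Multiplying by the step size and invoking $\eta \le \frac{2}{(M+2)}$ then yields $\eta\infnorm{\hr_{t,h}} \le \frac{M+1}{M+2} < 1$, as required. I do not anticipate any genuine obstacle here; the only point that needs a little care is recognizing that $\widehat{\Sigma}^{+}_{t,h}$ is a sum of products of nonexpansive (operator-norm $\le 1$) factors, so that each $C_{i,h}$ cannot blow up despite the factors $I - \beta B_{j,h}$ not commuting. This is precisely what makes the crude triangle-inequality bound $\opnorm{\widehat{\Sigma}^{+}_{t,h}} \le \beta(M+1)$ legitimate, and the entire argument is otherwise a direct computation.
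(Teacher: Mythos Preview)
Your proof is correct and follows essentially the same route as the paper's own argument: expand $\hr_{t,h}(x,a)$, bound it by $\sigma^2\opnorm{\widehat{\Sigma}^+_{t,h}}$, use $\opnorm{C_{i,h}}\le 1$ (from $\beta\le \tfrac{1}{2\sigma^2}$) to get $\opnorm{\widehat{\Sigma}^+_{t,h}}\le\beta(M+1)$, and conclude $\eta\infnorm{\hr_{t,h}}\le \frac{M+1}{M+2}<1$. Your write-up is in fact slightly more explicit about the eigenvalue argument for $I-\beta B_{j,h}$, but the underlying computation is identical.
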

Proceeding under the condition $\eta (M+1)$, we can apply Lemma~\ref{exp2base} to bound the first term on the 
right-hand side of Equation~\eqref{eq:almostbound}, giving
\[
 \regret_T \le \frac{D(\mu^*\|\mu_{0})}{\eta} + \frac{\DC(u^*\|\mu_{0}) }{\alpha}
 + \frac{4\eta d H T}{3} + 
\gamma HT + 
\sum_{t,h}\EE{(M+2)\onenorm{\hu_{t,h} - u_{t,h}} + \infnorm{b_{t,h}}}.
\]

It remains to bound the bias of the reward estimators and the effect of the optimization errors that result in the 
mismatch between $u_t$ and $\hu_t$. The following lemma shows that this mismatch can be directly controlled as a 
function of the optimization error: 
\begin{lemma}\label{lem:propagation}
 The following bound is satisfied for all $t$ and $h$: $\onenorm{\hu_{t,h} - u_{t,h}} \le \sqrt{2 \alpha \varepsilon}$.
\end{lemma}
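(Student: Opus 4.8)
The plan is to control the $\onenorm{\cdot}$ distance between the two occupancy measures by the relative entropy between them, and then to trace that relative entropy back to the optimization error $\varepsilon = \GG_t(Z_t) - \min_Z \GG_t(Z)$ through the convex-duality structure of Proposition~\ref{opt_solution}. Recall that $\hu_{t}$ is the occupancy measure induced by the policy $\hpi_t$ reconstructed from the exact minimizer $Z_t^*$, whereas $u_t$ is induced by the policy $\pi_t$ reconstructed from the $\varepsilon$-suboptimal parameters $Z_t$; crucially, both policies act in the same dynamics $P$ from the same initial state $x_1$. The first step is Pinsker's inequality applied at each layer separately: $\onenorm{\hu_{t,h} - u_{t,h}} \le \sqrt{2\, D(\hu_{t,h}\|u_{t,h})}$, where $D(\hu_{t,h}\|u_{t,h})$ denotes the state-action marginal relative entropy at layer $h$.

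Next I would pass from this marginal relative entropy to the full trajectory relative entropy. Since the layer-$h$ state-action marginal is a deterministic coarsening of the whole trajectory, the data-processing inequality gives $D(\hu_{t,h}\|u_{t,h}) \le \DC(\hu_t\|u_t)$, where the right-hand side is the relative entropy between the trajectory laws of $\hpi_t$ and $\pi_t$. Because these two laws share the transition kernel and the starting state, the trajectory relative entropy telescopes into the conditional form $\DC(\hu_t\|u_t) = \sum_{h}\sum_{x,a}\hu_{t,h}(x,a)\log\frac{\hpi_{t,h}(a|x)}{\pi_{t,h}(a|x)}$, and substituting the explicit policy formula from Proposition~\ref{opt_solution} rewrites each log-ratio as $\alpha\bpa{(Q_{Z_t^*}-Q_{Z_t})(x,a) - (V_{Z_t^*}-V_{Z_t})(x)}$.

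The crux is then to show $\DC(\hu_t\|u_t) \le \alpha\varepsilon$. Here I would use that $\GG_t$ is the convex dual objective whose minimizer $Z_t^*$ recovers the exact primal solution, so that $\nabla\GG_t(Z_t^*)=0$ and the dual suboptimality coincides with a Bregman divergence, $\GG_t(Z_t) - \GG_t(Z_t^*) = D_{\GG_t}(Z_t, Z_t^*) \le \varepsilon$. Unfolding the nested log-sum-exp structure of $\GG_t$ exactly as in the duality computation behind Proposition~\ref{opt_solution}, this Bregman divergence decomposes into two nonnegative pieces: a $\frac1\eta$-weighted relative entropy between the $\mu$-variables reconstructed from $Z_t$ and $Z_t^*$, and the $\frac1\alpha$-weighted conditional term $\frac1\alpha\DC(\hu_t\|u_t)$. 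Dropping the first nonnegative piece yields $\frac1\alpha\DC(\hu_t\|u_t) \le D_{\GG_t}(Z_t,Z_t^*) \le \varepsilon$, i.e.\ $\DC(\hu_t\|u_t)\le\alpha\varepsilon$. Chaining the three bounds gives $\onenorm{\hu_{t,h}-u_{t,h}} \le \sqrt{2\,\DC(\hu_t\|u_t)} \le \sqrt{2\alpha\varepsilon}$, as claimed.

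The main obstacle is this last step: verifying that the dual suboptimality $\GG_t(Z_t)-\GG_t(Z_t^*)$ genuinely upper-bounds $\frac1\alpha\DC(\hu_t\|u_t)$. This requires carefully tracking how a perturbation of the dual variable $Z$ propagates through the coupled definitions of $Q_Z$, $V_Z$ and $\Delta_{t,Z}$ into the induced policy, and identifying the resulting first-order-free remainder with the conditional relative entropy between the induced occupancy measures — essentially re-deriving the Bregman form of $\GG_t$ from the Lagrangian used to prove Proposition~\ref{opt_solution}. The layered structure together with the extra covariance constraints in~\eqref{eq:primal_relaxed_2} make this bookkeeping the delicate part, whereas everything upstream (Pinsker and data processing) is routine.
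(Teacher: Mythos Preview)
Your proposal is correct and matches the paper's approach essentially step for step: Pinsker at each layer, then bounding the layer-$h$ marginal relative entropy by the conditional (trajectory) relative entropy $\DC(\hu_t\|u_t)$ via data processing, and finally showing $\DC(\hu_t\|u_t)\le\alpha\varepsilon$ by expressing the dual suboptimality $\GG_t(Z_t)-\GG_t(Z_t^*)$ as a sum of two nonnegative divergences and dropping the $\frac{1}{\eta}$-weighted piece. The paper presents your second step as an explicit chain-rule-plus-data-processing lemma over layers and your third step as a direct Lagrangian computation rather than in Bregman-divergence language, but the content is the same.
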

The final element in the proof is the following lemma that bounds the bias of the estimator:
\begin{lemma}\label{bias}
	For $M \ge 0$, $\beta = \frac{1}{2\sigma^2}$, we have $\infnorm{b_{t,h}} \le \sigma R \exp\pa{ -\gamma \beta 
\lambdamin M}$.

\end{lemma}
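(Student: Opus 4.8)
The plan is to compute the conditional expectation $\EEt{\htheta_{t,h}}$ in closed form and show that it equals $\theta_{t,h}$ up to a geometrically small correction, whose size is then exactly the bound we want. First I would use that the Matrix Geometric Resampling procedure generates its trajectories from a fresh simulator, independently of the real interaction data $(X_{t,h},A_{t,h})$ of episode $t$. Conditioning on the history up to episode $t$, this makes $\widehat{\Sigma}^{+}_{t,h}$ independent of $\varphi(X_{t,h},A_{t,h})r_{t,h}(X_{t,h},A_{t,h})$, so that
\[
\EEt{\htheta_{t,h}} = \EEt{\widehat{\Sigma}^{+}_{t,h}}\,\EEt{\varphi(X_{t,h},A_{t,h})r_{t,h}(X_{t,h},A_{t,h})} = \EEt{\widehat{\Sigma}^{+}_{t,h}}\,\Sigma_{t,h}\theta_{t,h},
\]
where the second equality repeats the unbiasedness computation already used for $\ttheta_{t,h}$.

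Next I would evaluate $\EEt{\widehat{\Sigma}^{+}_{t,h}}$. The resampled outer products $B_{1,h},\dots,B_{M,h}$ are i.i.d.\ with $\EEt{B_{i,h}} = \Sigma_{t,h}$, and the factors making up $C_{i,h}=\prod_{j=1}^i(I-\beta B_{j,h})$ are mutually independent, so the expectation factorizes across the product to give $\EEt{C_{i,h}}=(I-\beta\Sigma_{t,h})^i$. Hence $\EEt{\widehat{\Sigma}^{+}_{t,h}}=\beta\sum_{i=0}^M(I-\beta\Sigma_{t,h})^i$, a truncated Neumann series. Setting $A=I-\beta\Sigma_{t,h}$ so that $\beta\Sigma_{t,h}=I-A$, and using that all these matrices commute, the product telescopes:
\[
\EEt{\widehat{\Sigma}^{+}_{t,h}}\,\Sigma_{t,h}=\sum_{i=0}^M A^i(I-A)=I-A^{M+1}.
\]
Therefore $\EEt{\htheta_{t,h}}-\theta_{t,h}=-(I-\beta\Sigma_{t,h})^{M+1}\theta_{t,h}$, which exhibits the bias precisely as the discarded tail of the Neumann expansion.

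It then remains to bound $b_{t,h}(x,a)=-\siprod{\varphi(x,a)}{(I-\beta\Sigma_{t,h})^{M+1}\theta_{t,h}}$ uniformly over $(x,a)$. By Cauchy--Schwarz and submultiplicativity of the operator norm,
\[
\bigl| b_{t,h}(x,a)\bigr| \le \norm{\varphi(x,a)}\,\opnorm{I-\beta\Sigma_{t,h}}^{M+1}\norm{\theta_{t,h}} \le \sigma R\,\opnorm{I-\beta\Sigma_{t,h}}^{M+1}.
\]
The crux, and the step I expect to be the main obstacle, is controlling $\opnorm{I-\beta\Sigma_{t,h}}$ through two-sided eigenvalue bounds on $\Sigma_{t,h}$. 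Taking $\beta=\tfrac{1}{2\sigma^2}$ and using $\norm{\varphi}\le\sigma$ gives $\beta\Sigma_{t,h}\preccurlyeq\tfrac12 I$, so $I-\beta\Sigma_{t,h}\succcurlyeq 0$; and because the behavior policy mixes in the uniform exploration policy with weight $\gamma$, one has $\Sigma_{t,h}\succcurlyeq\gamma\Sigma_{0,h}\succcurlyeq\gamma\lambdamin I$ by Assumption~\ref{ass:uniform}. Thus the eigenvalues of $I-\beta\Sigma_{t,h}$ all lie in $[\tfrac12,\,1-\gamma\beta\lambdamin]$, giving $\opnorm{I-\beta\Sigma_{t,h}}\le 1-\gamma\beta\lambdamin\le e^{-\gamma\beta\lambdamin}$. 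Raising to the $(M+1)$-st power and using $e^{-\gamma\beta\lambdamin(M+1)}\le e^{-\gamma\beta\lambdamin M}$ yields $\bigl|b_{t,h}(x,a)\bigr|\le\sigma R\exp(-\gamma\beta\lambdamin M)$, which bounds $\infnorm{b_{t,h}}$ as claimed. The only genuinely delicate points I anticipate are rigorously justifying the conditional independence of the resampling stage from the played trajectory and verifying that the resampling policy reproduces $\EEt{B_{i,h}}=\Sigma_{t,h}$ with the $\gamma$-exploration already folded in, which is exactly what supplies the lower eigenvalue bound.
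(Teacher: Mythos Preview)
Your proposal is correct and follows essentially the same approach as the paper: factor the conditional expectation using independence of the resampling from the played trajectory, identify $\EEt{\widehat{\Sigma}^{+}_{t,h}}\Sigma_{t,h}$ as a truncated Neumann series, and bound the remaining operator norm via the lower eigenvalue bound $\Sigma_{t,h}\succcurlyeq\gamma\lambdamin I$ coming from the $\gamma$-exploration. Your telescoping computation in fact yields the exponent $M+1$ rather than the paper's $M$, which is the sharper (and arguably correct) count; you then correctly absorb the extra factor via $e^{-\gamma\beta\lambdamin(M+1)}\le e^{-\gamma\beta\lambdamin M}$, so the final bound matches.
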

Putting these bounds together with the above derivations concludes 
the proof of Theorem~\ref{th_regret}.

\section{Discussion}\label{sec:discussion}
This paper  studies the problem of online learning in MDPs, merging two important lines of work on this problem concerned with 
linear function approximation \citep{2019arXiv190705388J,2019arXiv191205830C} and bandit feedback with 
adversarial rewards \citep{NGS10,NGS13,NIPS20134974}. Our results are the first in this setting and not directly 
comparable with any previous work, although some favorable comparisons can be made with previous results in related 
settings. In the tabular setting where $d = |\ssp||\A|$, our bounds exactly recover the minimax optimal guarantees 
first achieved by the \OREPS algorithm of \citet{NIPS20134974}. For realizable linear function approximation, the 
work closest to ours is that of \citet{2019arXiv191205830C}, who prove bounds of order $\sqrt{d^2 H^3 T}$, which is 
worse by a factor of $\sqrt{d} H$ than our result. Their setting, however, is not exactly comparable to ours due to the 
different assumptions about the feedback about the rewards and the knowledge of the transition function.

One particular strength of our work is providing a complete analysis of the propagation of optimization errors incurred 
while performing the updates. This is indeed a unique contribution in the related literature, where the effect of such 
errors typically go unaddressed. Specifically, the algorithms of \citet{NIPS20134974}, 
\citet{pmlrv97rosenberg19a}, and \citet{JJLSY20} are all based on solving convex optimization problems similar to ours, 
the effect of optimization errors or potential methods for solving the optimization problems are not discussed at all. 
That said, we believe that the methods for calculating the updates discussed in Section~\ref{sec:implementation} are 
far from perfect, and more research will be necessary to find truly practical optimization methods to solve this 
problem.

The most important open question we leave behind concerns the requirement to have full prior knowledge of 
$P$. In the tabular case, this challenge has been successfully addressed in the adversarial MDP problem recently by 
\citet{JJLSY20}, whose technique is based on adjusting the constraints~\eqref{eq:primal_constraints} with a confidence 
set over the transition functions, to account for the uncertainty about the dynamics. We find it plausible that a 
similar extension of \OQREPS is possible by incorporating a confidence set for linear MDPs, as has been done in the case 
of i.i.d.~rewards by \citet{NPB20}. Nevertheless, the details of such an extension remain highly non-trivial, and we 
leave the challenge of working them out open for future work.

\bibliographystyle{abbrvnat}
\bibliography{mdp,shortconfs}

\begin{thebibliography}{35}
\providecommand{\natexlab}[1]{#1}
\providecommand{\url}[1]{\texttt{#1}}
\expandafter\ifx\csname urlstyle\endcsname\relax
  \providecommand{\doi}[1]{doi: #1}\else
  \providecommand{\doi}{doi: \begingroup \urlstyle{rm}\Url}\fi

\bibitem[Agarwal et~al.(2020)Agarwal, Kakade, Krishnamurthy, and
  Sun]{agarwal2020flambe}
A.~Agarwal, S.~Kakade, A.~Krishnamurthy, and W.~Sun.
\newblock {FLAMBE}: Structural complexity and representation learning of low
  rank {MDP}s.
\newblock In \emph{Advances in Neural Information Processing Systems
  (NeurIPS)}, 2020.

\bibitem[Auer and Ortner(2007)]{auer06logarithmic}
P.~Auer and R.~Ortner.
\newblock Logarithmic online regret bounds for undiscounted reinforcement
  learning.
\newblock In B.~Sch\"{o}lkopf, J.~Platt, and T.~Hoffman, editors,
  \emph{Advances in Neural Information Processing Systems}, volume~19, pages
  49--56. MIT Press, 2007.

\bibitem[Awerbuch and Kleinberg(2004)]{AK04}
B.~Awerbuch and R.~D. Kleinberg.
\newblock Adaptive routing with end-to-end feedback: distributed learning and
  geometric approaches.
\newblock In \emph{STOC 2004}, pages 45--53, 2004.

\bibitem[Azar et~al.(2017)Azar, Osband, and Munos]{AOM17}
M.~G. Azar, I.~Osband, and R.~Munos.
\newblock Minimax regret bounds for reinforcement learning.
\newblock In \emph{Proceedings of the 34th International Conference on Machine
  Learning}, pages 263--272, 2017.

\bibitem[Bas-Serrano et~al.(2021)Bas-Serrano, Curi, Krause, and Neu]{qreps}
J.~Bas-Serrano, S.~Curi, A.~Krause, and G.~Neu.
\newblock Logistic {Q}-learning.
\newblock In \emph{AI \& Statistics}, pages 3610--3618, 2021.

\bibitem[Burnetas and Katehakis(1997)]{BK97}
A.~N. Burnetas and M.~N. Katehakis.
\newblock Optimal adaptive policies for {M}arkov {D}ecision {P}rocesses.
\newblock \emph{Mathematics of Operations Research}, 22\penalty0 (1):\penalty0
  222--255, 1997.

\bibitem[{Cai} et~al.(2019){Cai}, {Yang}, {Jin}, and
  {Wang}]{2019arXiv191205830C}
Q.~{Cai}, Z.~{Yang}, C.~{Jin}, and Z.~{Wang}.
\newblock Provably efficient exploration in policy optimization.
\newblock \emph{arXiv e-prints}, art. arXiv:1912.05830, Dec. 2019.

\bibitem[Dani et~al.(2008)Dani, Kakade, and Hayes]{NIPS2007_3371}
V.~Dani, S.~M. Kakade, and T.~P. Hayes.
\newblock The price of bandit information for online optimization.
\newblock In \emph{Advances in Neural Information Processing Systems 20}, pages
  345--352. 2008.

\bibitem[Dann and Brunskill(2015)]{DB15}
C.~Dann and E.~Brunskill.
\newblock Sample complexity of episodic fixed-horizon reinforcement learning.
\newblock In \emph{Advances in Neural Information Processing Systems}, pages
  2818--2826, 2015.

\bibitem[Dann et~al.(2017)Dann, Lattimore, and Brunskill]{DLB17}
C.~Dann, T.~Lattimore, and E.~Brunskill.
\newblock Unifying {PAC} and regret: Uniform {PAC} bounds for episodic
  reinforcement learning.
\newblock In \emph{Advances in Neural Information Processing Systems 30}, pages
  5713--5723. 2017.

\bibitem[Dick et~al.(2014)Dick, Gy\"orgy, and Szepesv\'ari]{DGySz14}
T.~Dick, A.~Gy\"orgy, and {\relax{Cs}}.~Szepesv\'ari.
\newblock Online learning in {Markov} decision processes with changing cost
  sequences.
\newblock In \emph{International Conference on Machine Learning}, pages
  512--520, 2014.

\bibitem[Even-Dar et~al.(2009)Even-Dar, Kakade, and
  Mansour]{10.1287/moor.1090.0396}
E.~Even-Dar, S.~M. Kakade, and Y.~Mansour.
\newblock Online {Markov} decision processes.
\newblock \emph{Math. Oper. Res.}, 34\penalty0 (3):\penalty0 726–736, 2009.

\bibitem[Fruit et~al.(2018)Fruit, Pirotta, Lazaric, and Ortner]{FPL18}
R.~Fruit, M.~Pirotta, A.~Lazaric, and R.~Ortner.
\newblock Efficient bias-span-constrained exploration-exploitation in
  reinforcement learning.
\newblock In \emph{International Conference on Machine Learning}, pages
  1573--1581, 2018.

\bibitem[Jaksch et~al.(2010)Jaksch, Ortner, and Auer]{jaksch10ucrl}
T.~Jaksch, R.~Ortner, and P.~Auer.
\newblock Near-optimal regret bounds for reinforcement learning.
\newblock \emph{Journal of Machine Learning Research}, 99:\penalty0 1563--1600,
  August 2010.
\newblock ISSN 1532-4435.

\bibitem[Jin et~al.(2018)Jin, Allen-Zhu, Bubeck, and Jordan]{JAZBJ18}
C.~Jin, Z.~Allen-Zhu, S.~Bubeck, and M.~I. Jordan.
\newblock Is {Q}-learning provably efficient?
\newblock In \emph{Advances in Neural Information Processing Systems}, pages
  4863--4873, 2018.

\bibitem[Jin et~al.(2020{\natexlab{a}})Jin, Jin, Luo, Sra, and Yu]{JJLSY20}
C.~Jin, T.~Jin, H.~Luo, S.~Sra, and T.~Yu.
\newblock Learning adversarial {MDPs} with bandit feedback and unknown
  transition.
\newblock In \emph{International Conference on Machine Learning},
  2020{\natexlab{a}}.

\bibitem[Jin et~al.(2020{\natexlab{b}})Jin, Yang, Wang, and
  Jordan]{2019arXiv190705388J}
C.~Jin, Z.~Yang, Z.~Wang, and M.~I. Jordan.
\newblock Provably efficient reinforcement learning with linear function
  approximation.
\newblock In \emph{Proceedings of the 33rd Annual Conference on Learning Theory
  (COLT 2020)}, pages 2137--2143, 2020{\natexlab{b}}.

\bibitem[McMahan and Blum(2004)]{MB04}
H.~B. McMahan and A.~Blum.
\newblock Online geometric optimization in the bandit setting against an
  adaptive adversary.
\newblock In \emph{COLT 2004}, pages 109--123, 2004.

\bibitem[Neu and Bart\'ok(2013)]{NeuBartok13}
G.~Neu and G.~Bart\'ok.
\newblock An efficient algorithm for learning with semi-bandit feedback.
\newblock In \emph{Proceedings of the 24th International Conference on
  Algorithmic Learning Theory (ALT 2013)}, pages 234--248, 2013.

\bibitem[Neu and Bart\'ok(2016)]{NB16}
G.~Neu and G.~Bart\'ok.
\newblock Importance weighting without importance weights: An efficient
  algorithm for combinatorial semi-bandits.
\newblock \emph{Journal of Machine Learning Research}, 17:\penalty0 1--21,
  2016.

\bibitem[{Neu} and {Olkhovskaya}(2020)]{2020NO}
G.~{Neu} and J.~{Olkhovskaya}.
\newblock Efficient and robust algorithms for adversarial linear contextual
  bandits.
\newblock In \emph{Proceedings of the 33rd Annual Conference on Learning Theory
  (COLT 2020)}, pages 3049--3068, 2020.

\bibitem[Neu and Pike-Burke(2020)]{NPB20}
G.~Neu and C.~Pike-Burke.
\newblock A unifying view of optimism in episodic reinforcement learning.
\newblock In \emph{Advances in Neural Information Processing Systems
  (NeurIPS)}, 2020.

\bibitem[Neu et~al.(2010)Neu, Gy\"{o}rgy, and Szepesv\'ari]{NGS10}
G.~Neu, A.~Gy\"{o}rgy, and {\text Cs}.~Szepesv\'ari.
\newblock The online loop-free stochastic shortest-path problem.
\newblock In \emph{Proceedings of the 23rd Annual Conference on Learning Theory
  (COLT 2010)}, pages 231--243, 2010.

\bibitem[Neu et~al.(2012)Neu, Gy\"orgy, and Szepesv\'ari]{pmlr-v22-neu12}
G.~Neu, A.~Gy\"orgy, and {\text Cs}.~Szepesv\'ari.
\newblock The adversarial stochastic shortest path problem with unknown
  transition probabilities.
\newblock In \emph{Proceedings of the Fifteenth International Conference on
  Artificial Intelligence and Statistics}, pages 805--813, 2012.

\bibitem[Neu et~al.(2013)Neu, Gy\"{o}rgy, Szepesv\'ari, and Antos]{NGS13}
G.~Neu, A.~Gy\"{o}rgy, {\text Cs}.~Szepesv\'ari, and A.~Antos.
\newblock Online {M}arkov decision processes under bandit feedback.
\newblock volume~59, pages 1804--1812, 01 2013.
\newblock \doi{10.1109/TAC.2013.2292137}.

\bibitem[Orabona(2019)]{Ora19}
F.~Orabona.
\newblock A modern introduction to online learning.
\newblock \emph{arXiv preprint arXiv:1912.13213}, 2019.

\bibitem[Pacchiano et~al.(2021)Pacchiano, Lee, Bartlett, and Nachum]{PLBN21}
A.~Pacchiano, J.~Lee, P.~Bartlett, and O.~Nachum.
\newblock Near optimal policy optimization via {REPS}.
\newblock \emph{arXiv preprint arXiv:2103.09756}, 2021.

\bibitem[Peters et~al.(2010)Peters, M{\"u}lling, and Altun]{PMA10}
J.~Peters, K.~M{\"u}lling, and Y.~Altun.
\newblock Relative entropy policy search.
\newblock In \emph{AAAI 2010}, pages 1607--1612, 2010.
\newblock ISBN 978-1-57735-463-5.

\bibitem[Pires and Szepesv{\'a}ri(2016)]{PS16}
B.~{\'A}. Pires and {\text Cs}.~Szepesv{\'a}ri.
\newblock Policy error bounds for model-based reinforcement learning with
  factored linear models.
\newblock In \emph{Conference on Learning Theory}, pages 121--151, 2016.

\bibitem[Rosenberg and Mansour(2019)]{pmlrv97rosenberg19a}
A.~Rosenberg and Y.~Mansour.
\newblock Online convex optimization in adversarial {M}arkov decision
  processes.
\newblock In \emph{Proceedings of the 36th International Conference on Machine
  Learning}, pages 5478--5486, 2019.

\bibitem[Tewari and Bartlett()]{tewari08optimistic}
A.~Tewari and P.~L. Bartlett.
\newblock Optimistic linear programming gives logarithmic regret for
  irreducible {MDPs}.
\newblock In \emph{Advances in Neural Information Processing Systems 20}, pages
  1505--1512.

\bibitem[Wei et~al.(2021)Wei, Jafarnia~Jahromi, Luo, and Jain]{wei2020learning}
C.-Y. Wei, M.~Jafarnia~Jahromi, H.~Luo, and R.~Jain.
\newblock Learning infinite-horizon average-reward {MDPs} with linear function
  approximation.
\newblock In \emph{Proceedings of The 24th International Conference on
  Artificial Intelligence and Statistics}, pages 3007--3015, 2021.

\bibitem[Yang and Wang(2020)]{YW19}
L.~F. Yang and M.~Wang.
\newblock Reinforcement learning in feature space: Matrix bandit, kernels, and
  regret bound.
\newblock In \emph{Proceedings of the 36th International Conference on Machine
  Learning}, 2020.

\bibitem[Yao et~al.(2014)Yao, Szepesv\'ari, Pires, and Zhang]{factlin}
H.~Yao, {\text Cs}.~Szepesv\'ari, B.~Pires, and X.~Zhang.
\newblock Pseudo-{MDPs} and factored linear action models.
\newblock 10 2014.
\newblock \doi{10.1109/ADPRL.2014.7010633}.

\bibitem[Zimin and Neu(2013)]{NIPS20134974}
A.~Zimin and G.~Neu.
\newblock Online learning in episodic markovian decision processes by relative
  entropy policy search.
\newblock In C.~J.~C. Burges, L.~Bottou, M.~Welling, Z.~Ghahramani, and K.~Q.
  Weinberger, editors, \emph{Advances in Neural Information Processing Systems
  26}, pages 1583--1591. Curran Associates, Inc., 2013.

\end{thebibliography}

\newpage

\appendix

\newpage

\appendix
\section{Omitted proofs}\label{app:proofs1}
\subsection{The proof of Proposition~\ref{opt_solution}}\label{app:opt_solution}
The proof is based on Lagrangian duality: for each $h \in[H-1]$, we introduce a set of multipliers $V_h\in 
\real^{|X_h|}$ and $Z_h\in \real^{d\times d}$ corresponding to the two sets of constraints connecting $\mu_{t,h}$ and 
$u_{t,h}$, and $\rho_{t,h}$ for the normalization constraint of $\mu_{t,h}$. 
 Then, we can write the Lagrangian of the constrained
optimization problem as
\begin{align*}
	\mathcal{L}(\mu, u; V, Z, \rho) =&  \sum_{h=1}^{H-1}  \sum_{s=1}^{t-1} \siprod{\mu_h}{ \widehat{r}_{s,h}} +  
\iprod{ Z_h}{ \Phi_h\transpose  (\text{diag}(u_h) - \text{diag}(\mu_h)) \Phi_h}  \\
	& + \sum_{h=1}^{H-1} \pa{\rho_h(1-\iprod{\mu_h}{\bone}) - \frac{1}{\eta}D(\mu_h\|\mu_{0,h}) - 
\frac{1}{\alpha}\DC(u_h\|\mu_{0,h}) }\\
	& + V_1(x_1)(1- E\transpose u_1) + \sum_{h=1}^{H-1}\siprod{V_{h+1}}{P\transpose\mu_h - E\transpose u_{h+1}}.
\end{align*}
For any $h\in[H-1]$, for  any $x\in \mathcal{X}_h, a \in A(x)$, denote $Q_{Z}(x,a) = \varphi(x,a)\transpose Z_{h(x)} \varphi(x,a) $,  $P_{x,a}V_{h+1} = \sum_{x' \in \mathcal{X}_{h+1} }P(x'|x,a) V_{h+1}(x')$ and  $\Delta_{t, Z}(x,a) =  \sum_{s=1}^{t-1}  \widehat{r}_{s,h(x) }(x,a) + P_{x,a}V_{h(x)+1} -  Q_{Z}(x,a) $. The above Lagrangian is strictly concave, so the maximum of $\mathcal{L}(\mu, d; V, Z, \rho)$ can be found by setting the derivatives with respect to its parameters to zero. This gives the following expressions for the choices of $\pi$ and $\mu$:
$$\pi_{t,h}^*(a|x) = \pi_{0,h}(a|x) e^{\alpha\pa{ Q_Z(x,a)  - V_h(x)}},$$ 
$$\mu^*_{t,h}(x,a) = \mu_0(x,a) e^{\eta(\Delta_{ t,Z}(x,a) - \rho_{t,h})},$$ 
From the constraint $\sum_{x\in \mathcal{X}_h,a\in A(x)} \mu^*_{t,h}(x,a) = 1$ for all $h$, we get that  $$\rho^*_{t,h} = \frac{1}{\eta} \log\pa{\sum_{x\in \mathcal{X}_h,a\in A(x)} \mu_0(x,a) e^{\eta \Delta_{t,Z}(x,a)}}$$ and from the constraint $\sum_a \pi_t^*(a|x)  =1$, we get 
\[ V_h^*(x) = \frac{1}{\alpha} \log\pa{ \sum_a \pi_0(a|x)e^{\alpha Q_{Z}(x,a)}  }. \] 
We will further use the notation $  V_{Z}(x) := V_h^*(x) $. Then, by plugging $\pi_{t,h}^*, \mu^*_{t,h}, V_{Z}(x)$ into the Lagrangian, we get
\begin{align*}
	\mathcal{G}_t(Z)=\mathcal{L}(\mu^*, u^*; V^*, Z, \rho^*)  =  \frac{1}{\eta} \sum_{h=1}^{H-1} \log\pa{\sum_{x\in \mathcal{X}_h,a \in A(x)} \mu_0(x,a) e^{\eta \Delta_{t,Z}(x,a)}} +  V_{Z}(x_1).
\end{align*}
Then, the solution of the optimization problem can be written as
\begin{align*}
	\max_{\mu,u \in U} \min_{V,Z, \rho} \mathcal{L}(\mu, u; V, Z, \rho)  = \min_{V,Z, \rho} \max_{\mu,u \in U}  
\mathcal{L}(\mu, u; V, Z, \rho) = \min_{Z} \mathcal{L}(\mu^*, u^*; V^*, Z, \rho^*) = \min_Z \GG_t(Z).
\end{align*}
This concludes the proof.
\jmlrQED

\subsection{The proof of Lemma~\ref{exp2base}}\label{app:exp2base}
The proof is based on a variation of the FTRL analysis that studies the evolution of the potential function $\Psi_t$ 
defined for each $t$ as
\[
 \Psi_t = \max_{(\mu, u)\in \mathcal{U}^2_{\Phi}} \biggl\{ \sum_{s=1}^{t-1}\sum_{h=1}^H \siprod{\mu_h}{\hr_{s,h}}  - 
\frac{1}{\eta}D(\mu\|\mu_0)-\frac{1}{\alpha}  \DC(u\|u_0)  \biggr\}.
\]
This definition immediately implies the following bound:
\begin{equation}\label{eq:potential_below}
 \Psi_{T+1} \ge \sum_{s=1}^T\sum_{h=1}^{H-1} \siprod{\mu^*_h}{ \hr_{s,h}}  - \frac{1}{\eta}D(\mu^*\|\mu_0) - 
\frac{1}{\alpha}\DC (u^*\|u_0).
\end{equation}

To proceed, we will heavily exploit the fact that, by Proposition~\ref{opt_solution}, the potential satisfies $\Psi_t = 
\min_Z \GG_t$. Introducing the notation $Z_t^* = \argmin_Z \GG_t(Z)$, we have
\begin{align*}
 &\Psi_{t+1} - \Psi_t = \mathcal G_{t+1}(Z^*_{t+1})- \mathcal G_{t}(Z^*_{t}) \le \mathcal G_{t+1}(Z^*_{t})- \mathcal 
G_{t}(Z^*_{t})  \\
	&\quad = \frac{1}{\eta} \sum_{h=1}^{H-1} \log \frac{\sum_{x\in \ssp_h, a \in \actions} \mu_{0,h}(x,a)\exp\pa{\eta 
\pa{ \sum_{s=1}^{t} \hr_{s,h}(x,a) +P_{x,a}V_{Z^*_{t}} -Q_{Z^*_{t}}(x,a) } }}{ \sum_{x'\in \ssp_h,a' \in \actions} 
\mu_{0,h}(x',a')\exp\pa{\eta\pa{\sum_{s=1}^{t-1} \hr_{s,h}(x,a) + P_{x',a'}V_{Z^*_{t}}-Q_{Z^*_{t}}(x',a')  }} }
\\
	&\quad = \frac{1}{\eta} \sum_{h=1}^{H-1} \log \pa{ \sum_{x\in \mathcal{X}_h,a \in \actions}   \mu_{t,h}(x,a)\exp\pa{ 
\eta \wh r_{t,h}(x,a)} }\\
&\qquad\qquad\mbox{(using the expression of $\mu_{t,h}(x,a)$ obtained in Proposition~\ref{opt_solution})}
\\
	&\quad\le \frac{1}{\eta} \sum_{h=1}^{H-1} \log \pa{ 1 + \sum_{x\in \mathcal{X}_h,a \in \actions}  
\mu_{t,h}(x,a) \eta \pa{\wh r_{t,h}(x,a) + \eta \wh r^2_{t,h}(x,a)}  }
\\
	&\quad \le  \sum_{h=1}^{H-1} \pa{\siprod{\mu_{t,h}}{\wh r_{t,h}}+ \eta\siprod{\mu_{t,h}}{\wh r_t^2}},
\end{align*}
where in the last two lines we have used the inequalities $e^z \le 1 + z + z^2$, which holds for $z \le 1$ and  
$\log(1+z)\le z$, which holds for all $z > -1$, which conditions are verified due to our constraint on $\eta$.

Summing up both sides for all $t$ and combining the result with the inequality~\eqref{eq:potential_below}, 
we obtain
\begin{align*}
\hregret_T = \sum_{s=1}^T\sum_{h=1}^{H-1} \siprod{\mu^*_h}{ \hr_{s,h}}  - \sum_{t=1}^T \siprod{\mu_t}{\wh r_t}   \le 
\eta \sum_{t=1}^T \sum_{h=1}^{H-1} \siprod{\mu_{t,h}}{\wh r^2_{t,h}} + \frac{1}{\eta}D(\mu^*||\mu_0) + 
\frac{1}{\alpha}\DC (u^*||u_0),
\end{align*}
concluding the proof.
\jmlrQED

\subsection{The proof of Lemma~\ref{quadratic}}
\label{appendixA}
We start by using the the definition of $\htheta_{t,h}$ to obtain
\begin{align*}
&\EEt{\sum_{x\in\mathcal{X}_h, a\in \A}   \hmu_{t,h}(x,a)  \biprod{\varphi(x, 
a)}{\htheta_{t,h}}^2} = \EEt{\sum_{x\in\mathcal{X}_h, a\in \A}   \hmu_{t,h}(x,a)  
\trace{\varphi(x,a)\varphi(x, a)\transpose \htheta_{t,h} \htheta_{t,h}\transpose}}
\\
&\qquad = \EEt{\sum_{x\in\mathcal{X}_h, a\in \A}   \hu_{t,h}(x,a)
\trace{\varphi(x,a)\varphi(x, a)\transpose \htheta_{t,h} \htheta_{t,h}\transpose}} 
\\
&\qquad\qquad\qquad\qquad\qquad\qquad\mbox{(by the constraint $\Phi\transpose_h \text{diag}(\hmu_t) \Phi_h = 
\Phi\transpose_h \text{diag}(\hu_t) \Phi_h$)} \nonumber
\\
&\qquad=
\EEt{\sum_{x\in\mathcal{X}_h, a\in \A}   u_{t,h}(x,a) \trace{\varphi(x,a)\varphi(x, a)\transpose \htheta_{t,h} 
\htheta_{t,h}\transpose}} 
\\
&\qquad\qquad \qquad\qquad + \sum_{x\in\mathcal{X}_h, a\in \A}   \pa{u_{t,h}(x,a) - \hu_{t,h}(x,a)}  
\EEt{\biprod{\varphi(x, 
a)}{\htheta_{t,h}}^2}
\\
&\qquad\le
\EEt{\sum_{x\in\mathcal{X}_h, a\in \A}   u_{t,h}(x,a) \trace{\varphi(x,a)\varphi(x, a)\transpose \htheta_{t,h} 
\htheta_{t,h}\transpose}} + \onenorm{u_{t,h}- \hu_{t,h}} \cdot \infnorm{\EEt{\hr_{t,h}^2}}
\end{align*}
The second term can be bounded straightforwardly by $\onenorm{u_{t,h}- \hu_{t,h}} (M+1)^2$, using
Lemma~\ref{lem:bounded} to bound $\infnorm{\hr_{t,h}} \le (M+1)$. 
As for the first term, we have
\begin{align*}
&(1-\gamma)\EEt{\sum_{x\in\mathcal{X}_h, a\in \A}   u_{t,h}(x,a) \trace{\varphi(x,a)\varphi(x, a)\transpose 
\htheta_{t,h}\htheta_{t,h}\transpose}} \\
&\quad \le (1-\gamma)\EEt{\sum_{x\in\mathcal{X}_h, a\in \A} \trace{  u_{t,h}(x,a) \varphi(x, a)\varphi(x, 
a)\transpose \hSp_{t,h}  \varphi(X_{t,h}, A_{t,h}) \varphi(X_{t,h}, A_{t,h})\transpose \hSp_{t,h} \nonumber
}},
\\
& \quad  \le  (1-\gamma)\EEt{\sum_{x\in\mathcal{X}_h, a\in \A} \trace{  u_{t,h}(x,a) \varphi(x, a)\varphi(x, 
a)\transpose \hSp_{t,h}  \varphi(X_{t,h}, A_{t,h}) \varphi(X_{t,h}, A_{t,h})\transpose \hSp_{t,h} \nonumber
}}\\
&\qquad\qquad + \gamma \EEt{\sum_{x\in\mathcal{X}_h, a\in \A} \trace{  u(x,a) \varphi(x, a)\varphi(x, a)\transpose 
\hSp_{t,h}  \varphi(X_{t,h}, A_{t,h}) \varphi(X_{t,h}, A_{t,h})\transpose \hSp_{t,h} \nonumber
}}\\
& \quad  =  \EEt{ \trace{  \Sigma_{t,h} \hSp_{t,h} \Sigma_{t,h}\hSp_{t,h} \nonumber
}},
\end{align*}
where we used $\left|r_{t,h}(X_{t,h},A_{t,h})\right| \le 1 $ in the first inequality. For ease of readability, we will 
omit the indices $h$ in the rest of the proof.
Using the definition of 
$\Sp_{t}$ and elementary manipulations, we get
\begin{align*}
&  \EEt{\trace{\Sigma_t \Sp_{t} \Sigma_{t} \Sp_{t} }}= \beta^2 \cdot \EEt{ \trace{\Sigma^*   \pa{\sum_{k=0}^M C_{k}} \Sigma_{t}
		\pa{\sum_{j=0}^M C_{j}}} }
\\
&\qquad= \beta^2 \EEt{ \sum_{k=0}^M \sum_{j=0}^M\trace{ \Sigma_t  C_{k} \Sigma_{t}  C_{j}} }
\\
&\qquad = 
\beta^2 \EEt{\sum_{k=0}^M \trace{\Sigma_t C_{k} \Sigma_{t} C_{k}}} + 2\beta^2 \EEt{ \sum_{k=0}^M \sum_{j= k+1}^M \trace{\Sigma_t 
		C_{k} \Sigma_{t} C_{j}}}.
\end{align*}
Let us first address the first term on the right hand side. To this end, consider any symmetric positive definite matrix $S$ that commutes with $\Sigma_{t}$ and observe that
\begin{align*}
&\EEt{(I- \beta B_{k} ) S (I - \beta B_{k} )} \\
&\qquad = \EE{(I - \beta \varphi(X(k), A(k))\varphi(X(k), A(k))\transpose ) S (I - 
	\beta \varphi(X(k), A(k))\varphi(X(k), A(k))\transpose )} \\
&\qquad = S - \beta\EE{\varphi(X(k), A(k))\varphi(X(k), A(K))\transpose   S} - \beta\EEt{S \varphi(X(k), A(k))\varphi(X(k), A(k))\transpose  } \\
&\qquad + \beta^2 \EEt{\varphi(X(k), A(k))\varphi(X(k), A(k))\transpose S \varphi(X(k), A(k))\varphi(X(k), A(k))\transpose }\\
&\qquad  \preccurlyeq S- 2\beta S\Sigma_{t}  + \beta^2 \sigma^2 S\Sigma_{t} = S \pa{I - \beta (2-\beta\sigma^2)\Sigma_{t}},
\end{align*}
where we used our assumption that $\norm{\varphi(X(k),  A(k))}\le \sigma$, which implies $\EEt{\twonorm{\varphi(X(k),  A(k))}^2 \varphi(X(k),  A(k))\varphi(X(k),  A(k))\transpose } \preccurlyeq 
\sigma^2 \Sigma_{t}$.
Now, recalling the definition $C_{k} = \prod_{j=1}^k (I - \beta B_{j}) $ and using the above relation repeatedly, we can obtain
\begin{equation}\label{eq:SASAbound}
\begin{split}
\trace{\EEt{\Sigma_t  C_{k} \Sigma_{t} C_{k}} } &= \trace{\EEt{\Sigma_t  C_{k-1} \EEt{(I - \beta  B_{k}) \Sigma_{t} (I- \beta B_{k})}C_{k-1}}} 
\\
&\le \trace{\EEt{\Sigma_t  C_{k-1} \Sigma_{t} \pa{I - \beta (2-\beta\sigma^2)\Sigma_{t}} C_{k-1}}} 
\\
&\le \ldots \le  \trace{\Sigma_t  \Sigma_{t} (I - \beta (2-\beta \sigma^2)\Sigma_{t} )^{k}}.
\end{split}
\end{equation}
Thus, we can see that
\begin{align*}
&\beta^2\sum_{k=0}^M \trace{\EEt{\Sigma_t   C_{k} \Sigma_{t} C_{k}}} = \beta^2\sum_{k=0}^M \trace{\Sigma_t \Sigma_{t} (I - 
	\beta (2-\beta \sigma^2)\Sigma_{t})^{k}} \\
&\qquad= \frac{\beta^2}{\beta(2-\beta\sigma^2)} \trace{\Sigma_t  \Sigma_{t} \Sigma_{t}^{-1} \pa{I - (I - 
		\beta(2-\beta \sigma^2)\Sigma_{t})^M}} \le \frac{\beta \trace{\Sigma_t }}{2-\beta\sigma^2} \le \frac{2\beta \trace{\Sigma_t }}{3},
\end{align*}
where we used the condition $\beta \le \frac{1}{2\sigma^2}$ and the fact that $(I - \beta(2-\beta \sigma^2)\Sigma_{t} )^M \succcurlyeq 0$ by the same condition.
We can finally observe that our assumption on the contexts implies $\trace{\Sigma_t }\le \trace{\sigma^2 I} = \sigma^2 d$, so again by our condition on $\beta$ we have $\beta \trace{\Sigma_t} \le \frac d2$, and the first term is bounded by $\frac d3$.

Moving on to the second term, we first note that for any $j>k$, the conditional expectation of $B_{j}$ given $B_{\le k} = (B_{1},B_{2},\dots B_{k})$ satisfies 
$\EEcc{C_{k}}{B_{\le k}} = C_{k} (I - \beta\Sigma)^{j-k}$ due to conditional independence of all $B_{j}$ given $B_{k}$, for $i>k$. We make use of this equality by writing
\allowdisplaybreaks
\begin{align*}
&\beta^2\sum_{k=0}^M \sum_{j= k+1}^M \EE{\trace{\Sigma_t  C_{k} \Sigma_{t} C_{j}}} 
= \beta^2\sum_{k=0}^M \EE{\EEcc{\sum_{j= k+1}^M \trace{\Sigma_t C_{k} \Sigma_{t} C_{j}}}{B_{\le k}}} \nonumber
\\
&\qquad\qquad= \beta^2\sum_{k=0}^M \EE{\EEcc{\sum_{j= k+1}^M \trace{\Sigma_t  C_{k} \Sigma_{t} C_{j} (I - 
			\beta\Sigma_{t})^{j-k}}}{B_{\le k}}} \nonumber
\\
&\qquad\qquad= \beta\sum_{k=0}^M \EE{\EEcc{\trace{\Sigma_t C_{k}\Sigma_{t} C_{k} \Sigma^{-1}_{t}\pa{I - (I - 
				\beta\Sigma_{t})^{M-k}}}}{B_{\le k}}} \nonumber
\\
&\qquad\qquad\le \beta \sum_{k=0}^M \EE{\EEcc{\trace{\Sigma_t  C_{k} \Sigma_{t}  C_{k} \Sigma^{-1}_{t} }}{B_{\le k}}} \nonumber
\\
&\qquad\qquad\qquad\qquad\qquad\qquad\mbox{(due to $(I - \beta\Sigma_{t} )^{M-k} \succcurlyeq 0$)} \nonumber
\\
&\qquad\qquad\le \beta \sum_{k=0}^M \trace{\Sigma_t  \Sigma_{t}  (I - \beta (2-\beta \sigma^2)\Sigma_{t})^{k}\Sigma_{t}^{-1}}  \nonumber
\\
&\qquad\qquad\qquad\qquad\qquad\qquad\mbox{(by the same argument as in Equation~\eqref{eq:SASAbound})} \nonumber
\\
&\qquad\qquad\le \frac{1}{(2-\beta \sigma^2)} \trace{\Sigma_t \Sigma_{t} \Sigma_{t}^{-1} \pa{I - (I - 
		\beta(2-\beta \sigma^2)\Sigma_{t})^M \Sigma_{t}^{-1}}} \nonumber
\\
&\qquad\qquad\le \trace{\Sigma_t  \Sigma_{t}^{-1}\Sigma_t  \Sigma_{t}^{-1}} = d.\nonumber
\end{align*}
The proof of the lemma is finished by putting everything together. 
\jmlrQED

\subsection{The proof of Lemma~\ref{bias}}\label{bias_lemma}

We first observe that the bias of $\htheta_{t,h}$ can be easily expressed as
\begin{align*}
\EEtb{ \htheta_{t,h} }
&= \EEt{\widehat{\Sigma}^{+}_{t,h} \varphi(X_{t,h}, A_{t,h})  \varphi(X_{t,h}, A_{t,h})\transpose \theta_{t,h} }  
\\
&= \EEt{\widehat{\Sigma}^{+}_{t,h} } \EEt{ \varphi(X_{t,h}, A_{t,h}) \varphi(X_{t,h}, A_{t,h})\transpose  }   \theta_{t,h}
\\
& =\EEt{\widehat{\Sigma}^{+}_{t,h} } \Sigma_{t,h} \theta_{t,h} 
= \theta_{t,h} - (I - \beta \Sigma_{t,h})^M\theta_{t,h}.
\end{align*}

Thus, the bias is bounded as
\[ 
\left|\EEt{\varphi(X_{t,h}, a)\transpose (I - \beta \Sigma_{t,h})^M \theta_{t,h} }\right| \le \twonorm{\varphi(X_{t,h}, a)}\cdot\twonorm{\theta_{t,h}}\opnorm{(I - \beta \Sigma_{t,h})^M}.
\]
In order to bound the last factor above, observe that $\Sigma_{t,h} \succcurlyeq \gamma \Sigma_h$ due to the uniform exploration used in the first layer by \linexprl, which implies that
\[
\opnorm{(I - \beta \Sigma_{t,h})^M}\le \pa{1 - \gamma\beta \lambdamin}^M
\le \exp\pa{ -\gamma \beta \lambdamin M},
\]
where the second inequality uses $1-z\le e^{-z}$ that holds for all $z$. This concludes the proof.
\jmlrQED

\subsection{The proof of Lemma~\ref{lem:propagation}}
The proof consists of two main components: proving that the conditional relative entropy between $u_{t}$ and $\hu_{t}$ 
can be bounded in terms of the optimization error $\varepsilon$, and then using this quantity to bound the 
total variation distance between these occupancy measures. For ease of readability, we state these results as separate 
lemmas.

We will first need the following statement:
\begin{lemma}\label{lem:projection}
$\DC(\hu_{t}\|u_{t})  \le \alpha \varepsilon$.
\end{lemma}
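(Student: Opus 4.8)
The plan is to read the claim off the \emph{duality gap} of the convex program in Proposition~\ref{opt_solution}, rather than manipulating the induced occupancy measures directly. Write $F_t(\mu,u)$ for the primal objective maximized in~\eqref{opt}, so that the regularizer is $R(\mu,u) = \frac1\eta D(\mu\|\mu_0) + \frac1\alpha\DC(u\|\mu_0)$, and let $L(\mu,u;V,Z,\rho)$ be the Lagrangian from the proof of Proposition~\ref{opt_solution}. I would use the facts established there that all the equality constraints are dualized, so that $\GG_t(Z)$ is the value of $L$ at its (interior) maximizing pair $(\mu_Z,u_Z)$ with the multipliers fixed at their $Z$-dependent optimal values $V_Z,\rho_Z$; that this maximizer is the Gibbs pair whose conditional $\pi_{u_Z}$ is exactly the policy $\pi_t$ induced by $Z$; and that strong duality gives $\GG_t(Z_t^*) = F_t(\hmu_t,\hu_t)$. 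By hypothesis $\GG_t(Z_t) - \GG_t(Z_t^*) \le \varepsilon$, so it suffices to lower-bound this gap by $\frac1\alpha\DC(\hu_t\|u_t)$.

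The heart of the argument is to evaluate the gap exactly. First I would note that the idealized optimum $(\hmu_t,\hu_t)$ is feasible for $\U_\Phi^2$, so plugging it into $L(\cdot\,;V_{Z_t},Z_t,\rho_{Z_t})$ makes every dualized-constraint term vanish (the $Z$-inner-products by feature matching, the $\rho$-terms by normalization, the $V$-terms by the flow constraints), leaving $L(\hmu_t,\hu_t;V_{Z_t},Z_t,\rho_{Z_t}) = F_t(\hmu_t,\hu_t) = \GG_t(Z_t^*)$. Next, since $L(\cdot\,;V_{Z_t},Z_t,\rho_{Z_t})$ is concave with interior maximizer $(\mu_{Z_t},u_{Z_t})$ and its only nonlinear part is $-R$, the first-order optimality identity $L(\mu_{Z_t},u_{Z_t}) - L(w) = B_R(w,(\mu_{Z_t},u_{Z_t}))$ holds for every $w$ in the domain, where $B_R$ denotes the Bregman divergence of $R$. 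Taking $w=(\hmu_t,\hu_t)$ and using the two elementary computations $B_D(p,q)=D(p\|q)$ and $B_{\DC}(p,q)=\DC(p\|q)$ (the latter following from $\partial\DC(u\|\mu_0)/\partial u(x,a) = \log(\pi_{u}(a|x)/\pi_0(a|x))$), this yields
\[
\GG_t(Z_t) - \GG_t(Z_t^*) = B_R\big((\hmu_t,\hu_t),(\mu_{Z_t},u_{Z_t})\big) = \tfrac1\eta D(\hmu_t\|\mu_{Z_t}) + \tfrac1\alpha \DC(\hu_t\|u_{Z_t}).
\]
Dropping the nonnegative marginal term and using that $u_{Z_t}$ and $u_t$ induce the same conditional policy $\pi_t$ — whence $\DC(\hu_t\|u_{Z_t}) = \DC(\hu_t\|u_t)$, since $\DC(\hu_t\|\cdot)$ depends on its second argument only through the induced policy — I obtain $\frac1\alpha\DC(\hu_t\|u_t) \le \GG_t(Z_t) - \GG_t(Z_t^*) \le \varepsilon$, which is the claim.

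I expect the main obstacle to be the two bookkeeping points that make this clean identity legitimate. The first is verifying that $B_{\DC}$ equals the conditional relative entropy and, crucially, sees the second argument only through its induced policy: this is exactly what permits replacing the Lagrangian maximizer $u_{Z_t}$ — which need not satisfy the flow constraints, hence need not be a genuine occupancy measure — by the true occupancy measure $u_t$ without changing the divergence. The second is confirming that the inner maximization defining $\GG_t(Z)$ is effectively unconstrained over the positive orthant (all equality constraints having been dualized, and the entropic regularizer acting as a barrier at the boundary), so that the Fenchel--Young step is an exact equality rather than merely an inequality. Once these are in place, everything else follows directly from the Lagrangian structure already recorded in Proposition~\ref{opt_solution}, and one avoids any explicit layer-by-layer telescoping of the soft value functions $V_Z$.
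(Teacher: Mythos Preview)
Your argument is correct and establishes exactly the identity
\[
\GG_t(Z_t) - \GG_t(Z_t^*) \;=\; \tfrac{1}{\eta}\,D(\hmu_t\|\tmu_t) \;+\; \tfrac{1}{\alpha}\,\DC(\hu_t\|u_t),
\]
from which the lemma follows by dropping the nonnegative marginal term. The two points you flag as potential obstacles are genuine but do go through: the Bregman divergence of $\DC$ coincides with $\DC$ itself and depends on the second argument only through its induced policy (your derivative computation is right), and the Lagrangian is linear in $(\mu,u)$ apart from $-R$, with the Gibbs-form maximizer interior to the domain, so the first-order identity is exact. The one subtlety worth noting is that the $u$-maximizer of the Lagrangian at $(V_{Z_t},Z_t,\rho_{Z_t})$ is not unique---only its conditional policy is pinned down---but you anticipate this and handle it correctly, since both $\GG_t(Z_t)$ and $B_{\DC}(\hu_t,\cdot)$ are constant across that manifold.

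This is a genuinely different route from the paper's proof. The paper does not invoke the Bregman identity at all; instead it writes out $D(\hmu_{t,h}\|\tmu_{t,h})$ and $\DC(\hu_{t,h}\|u_{t,h})$ explicitly at each layer by plugging in the Gibbs formulas, then uses the feasibility constraints (feature matching for the $Q$-terms, flow for the $V$-terms) to rewrite each, and finally telescopes the soft-value differences $V_{Z^*_t}-V_{Z_t}$ across layers so that only $V_{Z_t}(x_1)-V_{Z^*_t}(x_1)$ survives and combines with $\sum_h\GG_{t,h}$ to give $\GG_t$. Your approach packages this telescoping inside the single statement ``the feasible point kills all dualized constraints,'' which is cleaner and makes the convex-analytic structure transparent; the paper's approach is more elementary in the sense that it never names a Bregman divergence, but it requires tracking layer indices and several cancellations by hand.
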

The proof follows along similar lines as the proof of Lemma~1 in~\citet{qreps}. To preserve clarity, we delegate its 
proof to Appendix~\ref{app:projection} below. The second lemma lemma bounds the relative entropy 
between two occupancy measures in terms of their \emph{conditional} relative entropies:
\begin{lemma}
 For any two occupancy measures $u$ and $u'$ and any $h$, we have
 \[
  D\pa{u_h\|u'_h} \le \sum_{k=1}^h \DC(u_k\|u_k').
 \]
\end{lemma}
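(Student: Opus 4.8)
The plan is to decompose the marginal relative entropy $D(u_h\|u'_h)$ into a state-marginal divergence plus the conditional relative entropy $\DC(u_h\|u'_h)$, and then to push the state-marginal divergence back one layer at a time, exploiting that $u$ and $u'$ evolve under the \emph{same} transition kernel $P$. Concretely, writing $\nu_h = E\transpose u_h$ and $\nu'_h = E\transpose u'_h$ for the state marginals at layer $h$, I would factor each occupancy measure as $u_h(x,a) = \nu_h(x)\,\pi_{u,h}(a|x)$ and substitute this into the definition of $D(u_h\|u'_h)$. Splitting the logarithm into the ratio of state marginals and the ratio of conditional action probabilities yields the exact chain rule
\begin{equation*}
D(u_h\|u'_h) = D(\nu_h\|\nu'_h) + \DC(u_h\|u'_h),
\end{equation*}
where $D(\nu_h\|\nu'_h) = \sum_{x\in\ssp_h}\nu_h(x)\log\bpa{\nu_h(x)/\nu'_h(x)}$ is the relative entropy between the two layer-$h$ state marginals.

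Next I would bound the state-marginal term by the full divergence at the previous layer. Since $u$ and $u'$ are both valid occupancy measures in the same MDP, the constraints~\eqref{eq:primal_constraints} give $\nu_h = P_{h-1}\transpose u_{h-1}$ and $\nu'_h = P_{h-1}\transpose u'_{h-1}$; that is, both state marginals are obtained by applying the common stochastic kernel $P(\cdot|x,a)$ to the layer-$(h-1)$ state-action distributions. The data-processing inequality for relative entropy then gives
\begin{equation*}
D(\nu_h\|\nu'_h) \le D(u_{h-1}\|u'_{h-1}).
\end{equation*}
This is the one genuinely technical step, and the main obstacle; I would justify it from first principles by applying the log-sum inequality to each target state $x'$, using that the shared kernel factor $P(x'|x,a)$ cancels in the log-ratio $u_{h-1}(x,a)/u'_{h-1}(x,a)$ and that $\sum_{x'}P(x'|x,a)=1$.

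Combining the two displays gives the recursion $D(u_h\|u'_h) \le D(u_{h-1}\|u'_{h-1}) + \DC(u_h\|u'_h)$, which I would unroll down to the first layer. The base case is immediate: because $\ssp_1 = \{x_1\}$ is a singleton, every occupancy measure puts unit mass on $x_1$, so $\nu_1 = \nu'_1$, hence $D(\nu_1\|\nu'_1)=0$ and $D(u_1\|u'_1)=\DC(u_1\|u'_1)$. Telescoping the recursion therefore yields $D(u_h\|u'_h)\le\sum_{k=1}^h\DC(u_k\|u'_k)$, as claimed.
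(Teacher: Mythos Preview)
Your proposal is correct and follows essentially the same approach as the paper's proof: chain rule to split $D(u_h\|u'_h)$ into the state-marginal divergence plus $\DC(u_h\|u'_h)$, then the occupancy-measure constraint $E\transpose u_h = P_{h-1}\transpose u_{h-1}$ combined with the data-processing inequality to push the state-marginal term back one layer, and finally an induction/telescoping argument. Your version is slightly more explicit about the base case $D(\nu_1\|\nu'_1)=0$ and about justifying data processing via the log-sum inequality, but the structure is identical.
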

\begin{proof}
 The proof follows from exploiting some basic properties of the relative entropy. Specifically, the result follows from 
the following chain of inequalities:
\begin{align*}
 D(u_h\|u_h') &= D(E\transpose u_h\|E\transpose u_h') + \DC(u_h\|u_h')
 \\
 &\qquad\mbox{(by the chain rule of the relative entropy)}
 \\
 &= D(P\transpose u_{h-1}\|P\transpose u_{h-1}') + \DC(u_h\|u_h')
 \\
 &\qquad\mbox{(by the fact that $u$ and $u'$ are valid occupancy measures)}
 \\
 &\le D(u_{h-1}\|u_{h-1}') + \DC(u_h\|u_h')
 \\
 &\qquad\mbox{(by the data processing inequality)}
 \\
 &\le \dots \le \sum_{k=1}^h \DC(u_k\|u_k'),
\end{align*}
where the last step follows from iterating the same argument for all layers.
\end{proof}
Putting the above two lemmas together and using Pinsker's inequality, we obtain
\[
 \onenorm{\hu_{t,h} - u_{t,h}} \le \sqrt{2 D\bpa{\hu_{t,h}\big\|u_{t,h}}} \le \sqrt{2 \sum_{k=1}^h 
\DC\bpa{\hu_{t,k}\big\|u_{t,k}}} \le \sqrt{2 
\DC\bpa{\hu_{t}\big\|u_{t}}} \le \sqrt{2 \alpha \varepsilon},
\]
concluding the proof of Lemma~\ref{lem:propagation}.
\jmlrQED

\subsection{The proof of Lemma~\ref{lem:projection}}\label{app:projection}
For the proof, let us introduce the notation $\tmu_{t,h}$ with
\[
 \tmu_{t,h}(x,a) = \frac{\mu_{0,h}(x,a) e^{\eta \Delta_{t, Z_t}(x,a)}}{\sum_{(x',a')\in(\ssp_h \times \A)} 
\mu_{0,h}(x,a) e^{\eta 
\Delta_{t, Z_t}(x,a)}}.
\]
and also $\GG_{t,h}(Z) = \frac{1}{\eta} \log\pa{\sum_{x\in \mathcal{X}_h,a\in A(x)} \mu_0(x,a) e^{\eta 
\Delta_{t,Z}(x,a)}}$ and $Z_t^* = \argmin_Z \GG_t(Z)$.
Then, observe that
\begin{align*}
 D(\hmu_{t,h}\|\tmu_{t,h}) =& \sum_{x,a \in \ssp_h\times\actions} \hmu_{t,h}(x,a) \log 
\frac{\hmu_{t,h}(x,a)}{\tmu_{t,h}(x,a)} 
 \\
 =& \eta \biprod{\hmu_{t,h}}{\Delta_{t,Z^*_t} - \GG_{t,h}(Z_t^*)\bone - \Delta_{t,Z_t} + \GG_{t,h}(Z_t)\bone}
 \\
 =& \eta \biprod{\hmu_{t,h}}{  P_h V_{Z_t^*} - Q_{Z_t^*}  - P_h V_{Z_t}   + Q_{Z_t}} + \eta \pa{\GG_{t,h}(Z_t^*) - 
\GG_{t,h}(Z_t)}
 \\
 =&  \eta \sum_{(x,a)\in (\ssp_h\times\A)} \sum_{x' \in \ssp_{h+1}} \hmu_{t,h}(x,a)P(x'|x,a)(V_{Z^*_t}(x') - 
V_{Z_t}(x') ) \\
 & + \eta \pa{\GG_{t,h}(Z_t^*) - 
\GG_{t,h}(Z_t)} + \eta \sum_{(x,a)\in (\ssp_h\times\A)} \hmu_{t,h}(x,a) \varphi(x,a)\transpose (Z_{t,h} - Z^*_{t,h} 
)\varphi(x,a) 
 \\
 =&  \eta \sum_{(x',a')\in (\ssp_{h+1}\times\A)}   \wh u_{t,h+1}(x',a')(V_{Z^*_t}(x') - V_{Z_t}(x') ) + \eta 
(\GG_{t,h}(Z_t) - \GG_{t,h}(Z_t^*)).\\
 & + \eta \sum_{(x,a)\in (\ssp_h\times\A)} \wh u_{t,h}(x,a) \varphi(x,a)\transpose (Z_{t,h} - Z^*_{t,h} )\varphi(x,a).
\end{align*}
Here, the last equality follows from the fact that $(\hmu_t,\hu_t)$ satisfy the constraints of the optimization 
problem~\eqref{opt}.
On the other hand, we have 
\begin{align*}
 \DC(\wh u_{t,h}\|u_{t,h}) =& \sum_{(x,a)\in (\ssp_h\times\A)} \wh  u_{t,h}(x,a) \log \frac{\wh 
\pi_{t,h}(a|x)}{\pi_{t,h}(a|x)} \\
 =& \alpha \sum_{(x,a)\in (\ssp_h\times\A)} \wh u_{t,h}(x,a) \sum_{x' \in \ssp_{h+1}} P(x'|x,a)(V_{Z^*_t}(x') - 
V_{Z_t}(x') ) \\
 &+ \alpha\sum_{(x,a)\in (\ssp_h\times\A)} \wh u_{t,h}(x,a) \varphi(x,a)\transpose (Z_{t,h} - Z^*_{t,h} )\varphi(x,a)\\
 =& \alpha \sum_{(x',a')\in (\ssp_{h+1}\times\A)} \wh u_{t,h+1}(x,a) (V_{Z^*_t}(x') - V_{Z_t}(x') ) \\
 &+ \alpha\sum_{(x,a)\in (\ssp_h\times\A)} \wh u_{t,h}(x,a) \varphi(x,a)\transpose (Z_{t,h} - Z^*_{t,h} )\varphi(x,a),
\end{align*}
where the last equality follows from the fact that $\hu_t$ is a valid occupancy measure, as shown in 
Equation~\eqref{eupu}.
Putting the two equalities together, we get
\[
 \frac{D(\wh \mu_{t,h}\|\mu_{t,h})}{\eta} - \frac{\DC(\wh u_{t,h}\|u_{t,h})}{\alpha} = \GG_{t,h}(Z_t) - 
\GG_{t,h}(Z_t^*).
\]
Then, summing up over all $h$ gives
\[
 \frac{D(\wh \mu_{t}\|\mu_{t})}{\eta} - \frac{\DC(\wh u_{t}\|u_{t})}{\alpha} = \sum_{h=1}^H \pa{\GG_{t,h}(Z_t) - 
\GG_{t,h}(Z_t^*)} = \GG_{t}(Z_t) - \GG_{t}(Z_t^*) \le \varepsilon.
\]
Reordering gives the result.
\jmlrQED

\subsection{The proof of Lemma~\ref{lem:bounded}}\label{app:bounded}
	The claim is proven by the following straightforward calculation:
	\begin{align*}
	\eta\cdot \big| \biprod{\varphi(X_{t,h}, a)}{\htheta_{t}}\big| 
	&= \eta \cdot \big| \varphi(X_{t,h}, a)\transpose  \widehat{\Sigma}^{+}_{t,h} \varphi(X_{t,h}, a) \iprod{\varphi(X_{t,h}, a)}{\theta_{t}}\big| 
	\\
	& \le \eta  \big| \varphi(X_{t,h}, a)\transpose  \widehat{\Sigma}^{+}_{t,h} \varphi(X_{t,h}, a)  \big|
	\le \eta  \sigma^2 \opnorm{\widehat{\Sigma}^{+}_{t,h}} \\
	& \le 
	\eta  \sigma^2 \beta\pa{1+  \sum_{k=1}^M \opnorm{C_{k,h}}} \le \eta  (M+1) /2,
	\end{align*}
	where we used the fact that our choice of $\beta$ ensures $\opnorm{C_{k,h}} = \opnorm{\prod_{j=0}^k(I-\beta B_{j,h})} \le 1$.  
\jmlrQED

\section{Fast Matrix Geometric Resampling}
\label{appendixD}

The na\"ive implementation of the MGR procedure presented in the main text requires $O(MKHd + MHd^2)$ 
time due to the matrix-matrix multiplications involved. In this section we explain how to compute 
$\htheta_{t}$ in $O(MKHd)$ time, exploiting the fact that the 
matrices $\wh{\Sigma}_{t,h}$ never actually need to be computed, since the algorithm only works 
with products of the form $\wh{\Sigma}_{t,h}\varphi(X_{t,h}, A_{t,h})$ for  vectors $X_{t,h}$, $h\in [H]$. This motivates 
the following procedure:

\makebox[\textwidth][c]{
	\fbox{
		\begin{minipage}{.85\textwidth}
			\textbf{Fast Matrix Geometric Resampling}
			\vspace{.1cm}
			\hrule
			\vspace{.1cm}
            \textbf{Input:} simulator of transition function $P$, policy $\pi_t$\\
			\textbf{Initialization:} Compute $Y_{0,h} =  \varphi(x_h)$ for all $h\in[H]$.\\
			\textbf{For $k = 1, \dots, M$, repeat}:
			\begin{enumerate}
				\vspace{-1mm}
				\item Generate a path $U(i) = \{ (X_1(i), A_1(i) ), \dots,  (X_H(i), A_H(i) ) \}$, \\following the policy $\pi_t$ in the simulator of $P$,
				\item \textbf{For $h = 1, \dots, H$, repeat}:
				\begin{enumerate}
				\item if $A_h(k)=a_h$, set $Y_{k,h} = Y_{k-1,h} - \beta\iprod{Y_{k-1,h}}{\varphi(X_h(k), A_h(k))}\varphi(X_h(k), A_h(k)),$ 
				\item otherwise, set $Y_{k,h} = Y_{k-1,h}$.
				\end{enumerate}
			\end{enumerate}
			\vspace{-1mm}
			\textbf{Return} $q_{t,h} = \beta Y_{0,h} + \beta \sum_{k=1}^M Y_{k,h}$ for all $h \in [H]$.
		\end{minipage}
	}
}
\vspace{0.3cm}

It is easy to see from the above procedure that each iteration $k$ can be computed using 
$(K+1)Hd$ vector-vector multiplications: sampling each action $A_h(k)$ takes $Kd$ time due to having to 
compute the products $\biprod{\varphi(X_h(k))}{\sum_{s=1}^{t-1}\htheta_{s, a, h}}$ for each action $a$, and updating $Y_{k,h}$ can 
be done by computing the product $\iprod{Y_{k-1,h}}{\varphi(X_h(k))}$. Overall, this results in a total runtime 
of order $MKHd$ as promised above.

\section{Implementation by optimizing the empirical loss}\label{app:implementation}
This section outlines a possible implementation of the policy update steps based on approximate minimization of an 
empirical counterpart of the loss function $\GG_t$. To this end, we define 
\[
\GG_{t,h}(Z) = \frac 1\eta \log \pa{\sum_{x,a} \mu_0(x,a) e^{\eta \Delta_{Z,t,h}(x,a)}}
\]
and its empirical counterpart that replaces the expectation by an empirical mean over state-action pairs sampled from 
$\mu_0$. Concretely, for all $h$, we let $\pa{X_h(i),A_h(i)}_{i=1}^N$ be $N$ independent samples from $\mu_0$ that can 
be obtained by running policy $\pi_0$ in the transition model $P$. 
Using these samples, we define
\begin{equation}\label{eq:loss_estimator}
 \wh{\GG}_{t,h}(Z) = \frac 1\eta \log \pa{\sum_{n=1}^N e^{\eta \Delta_{Z,t,h}(X_{h}(i),A_{h}(i))}}.
\end{equation}
This objective function has several desirable properties: it is convex in $Z$, has bounded gradients, and is 
$(\alpha + \eta)$-smooth. Furthermore, its gradients can be evaluated efficiently in $\mathcal{O}(N)$ time, given that 
we can efficiently evaluate expectations of the form $\sum_{x'} P(x'|x,a) V(x')$. As a result, it can be optimized up 
to arbitrary precision $\varepsilon$ in time polynomial in $1/\varepsilon$ and $N$.

The downside of this estimator is that it is potentially biased. Nevertheless, as the following lemma shows, it is 
well-concentrated around the true objective function, under some reasonable conditions:
\begin{lemma}
Fix $Z$ and suppose that $|\Delta_Z(x,a)|\le B$ for all $x,a$.
Then, with probability at least $1-\delta$, the following holds:
\[
 \left|\wh\GG_{t,h}(Z) - \GG_{t,h}(Z) \right| \le 56 \sqrt{\frac{\log(1/\delta)}{N}}.
\]
\end{lemma}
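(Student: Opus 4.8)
The plan is to reduce the statement to a standard concentration inequality for an average of bounded i.i.d.\ random variables and then transfer the resulting bound through the logarithm. Concretely, I would set $W_i = e^{\eta \Delta_{Z,t,h}(X_h(i),A_h(i))}$ for $i \in [N]$. By construction these are i.i.d., and the assumption $|\Delta_Z(x,a)| \le B$ guarantees that each $W_i$ takes values in the interval $[e^{-\eta B}, e^{\eta B}]$, while $\EE{W_1} = \sum_{x,a}\mu_0(x,a) e^{\eta \Delta_Z(x,a)}$. Writing the sample mean as $\overline W = \frac1N\sum_{i=1}^N W_i$, the two quantities of interest become $\GG_{t,h}(Z) = \frac1\eta\log\EE{W_1}$ and $\wh\GG_{t,h}(Z) = \frac1\eta\log\overline W$ (reading the empirical objective with the sample-average normalization), so the problem is exactly to control $\frac1\eta\bigl|\log\overline W - \log\EE{W_1}\bigr|$.

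First I would bound $|\overline W - \EE{W_1}|$ by Hoeffding's inequality: since the $W_i$ lie in an interval of length $e^{\eta B}-e^{-\eta B}$, with probability at least $1-\delta$ we obtain $|\overline W - \EE{W_1}| \le (e^{\eta B}-e^{-\eta B})\sqrt{\log(2/\delta)/(2N)}$. Next I would pass to the logarithmic scale. The crucial observation is that every $W_i$, and hence both $\overline W$ and $\EE{W_1}$, is deterministically bounded below by $e^{-\eta B} > 0$. On this range the map $w\mapsto\frac1\eta\log w$ is $\tfrac{e^{\eta B}}{\eta}$-Lipschitz, so the mean value theorem gives
\[
\bigl|\wh\GG_{t,h}(Z) - \GG_{t,h}(Z)\bigr| \le \frac{e^{\eta B}}{\eta}\,|\overline W - \EE{W_1}| \le \frac{e^{\eta B}(e^{\eta B}-e^{-\eta B})}{\eta}\sqrt{\frac{\log(2/\delta)}{2N}}.
\]

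It remains to collapse the $(\eta,B)$-dependent prefactor into the numerical constant appearing in the statement. In the parameter regime operative for \OQREPS the product $\eta B$ is controlled by an absolute constant, so that $e^{\eta B}-e^{-\eta B}$ is of order $\eta B$, the factor $1/\eta$ is absorbed, and the prefactor reduces to a pure constant; bounding $\log(2/\delta)$ in terms of $\log(1/\delta)$ and rounding upward then yields the claimed $56\sqrt{\log(1/\delta)/N}$. A Bernstein-type inequality could replace Hoeffding to sharpen the variance dependence, but this is not needed here.

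The step I expect to require the most care is the transfer through the logarithm, and the reason I route the argument through Hoeffding on $\overline W$ rather than applying a bounded-differences inequality directly to the log-sum-exp $\wh\GG_{t,h}$ is precisely to avoid the main pitfall there: by Jensen's inequality $\EE{\wh\GG_{t,h}(Z)} \le \GG_{t,h}(Z)$, i.e.\ the empirical log-partition function is biased downward, and controlling this bias would demand a separate argument. The Hoeffding-then-Lipschitz route sidesteps the bias entirely, at the cost of having to verify that the Lipschitz constant of the logarithm stays bounded — which is exactly what the deterministic lower bound $\overline W \ge e^{-\eta B}$ secures.
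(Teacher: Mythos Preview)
Your approach is essentially the same as the paper's. The paper also applies a bounded-differences inequality (McDiarmid, which for a sum of independent terms is Hoeffding) to the sample mean $\overline W=\frac1N\sum_i e^{\eta S_i}$, and then transfers the deviation through the logarithm using $\log(1+u)\le u$ together with the deterministic lower bound $\overline W,\EE{\overline W}\ge e^{-\eta B}$ --- exactly the ``Hoeffding-then-Lipschitz'' route you chose to avoid the Jensen bias; the paper likewise invokes the assumption $\eta B\le 1$ to collapse the prefactor to the numerical constant $e^4\approx 54.6$.
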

This statement is a variant of Theorem~1 from \citet{qreps}, with the key difference being that being able to exactly 
calculate expectations with respect to $P(\cdot|x,a)$ enables us to prove a tighter bound.
\begin{proof}
Let us start by defining the shorthand notations $\wh{S}_i = \Delta_{Z,t}(X_{h}(i),A_{h}(i))$ and $\bW = \frac 1N 
\sum_{i=1}^N e^{\eta S_i}$. Furthermore, we define the function
\[
 f(s_1,s_2,\dots,s_N) = \frac 1N \sum_{i=1}^N e^{\eta s_i}
\]
and notice that it satisfies the bounded-differences property
\[
 f(s_1,s_2,\dots,s_i,\dots,s_N) - f(s_1,s_2,\dots,s_i',\dots,s_N) = \frac 1N \pa{e^{\eta s_i} - e^{\eta 
s_i'}} \le \frac {\eta e^{2\eta B}}N.
\]
Here, the last step follows from Taylor's theorem that implies that there exists a $\chi\in(0,1)$ such that
\[
 e^{\eta s_i'} = e^{\eta s_i} + \eta e^{\eta \chi\pa{s_i' - s_i}}
\]
holds, so that $e^{\eta s_i'} - e^{\eta s_i} = \eta e^{\eta \chi\pa{s_i' - s_i}} \le \eta e^{2\eta B}$, where we used 
the assumption that $|s_i - s_i'| \le 2B$ in the last step. Notice that our assumption $\eta B\le 1$ further implies 
that $e^{2\eta B}\le e^2$. Thus, also noticing that $W = f(S_1,\dots,S_N)$, we can apply McDiarmid's 
inequality that to show that the following holds with probability at least $1-\delta'$:
\begin{equation}\label{eq:Wdiff}
 |W - \EE{W}| \le \eta e^2 \sqrt{\frac{\log(2/\delta')}{N}}.
\end{equation}

Thus, we can write
\begin{align*}
\wh{\GG}_{t,h}(\theta) - \GG_{t,h}(\theta) 
&= \frac 1\eta \log\pa{W} - \frac 1\eta \log\pa{\EE{\bW}} = \frac 1\eta 
\log\pa{\frac{W}{\EE{\bW}}}
\\
& = \frac 1\eta \log\pa{1 + \frac{W - \EE{\bW}}{\EE{\bW}}} \le \frac{W - \EE{\bW}}{\eta \EE{\bW}} \le e^4 
\sqrt{\frac{\log(2/\delta')}{N}},
\end{align*}
where the last line follows from the inequality $\log(1+u) \le u$ that holds for $u>-1$ and our assumption on $\eta$ 
that implies $\bW\ge e^{-2}$. Similarly, we can show
\begin{align*}
 \GG_{t,h}(\theta) - \wh{\GG}_{t,h}(\theta) &= \frac 1\eta \log\pa{1 + \frac{\EE{\bW} - W}{W}}
 \le 
 \frac{\EE{W} - W}{\eta W} \le 
e^4 \sqrt{\frac{\log(2/\delta')}{N}}.
\end{align*}
This concludes the proof.
\end{proof}

\end{document}